\definecolor{DarkGreen}{rgb}{0.1,0.5,0.1}
\definecolor{DarkRed}{rgb}{0.5,0.1,0.1}
\definecolor{DarkBlue}{rgb}{0.1,0.1,0.5}
\newtheorem{theorem}{Theorem}[section]
\newtheorem{definition}[theorem]{Definition}
\newtheorem{failuremode}{Failure mode}
\DeclareMathOperator*{\argmax}{arg\,max}
\DeclareMathOperator*{\argmin}{arg\,min}
\DeclareMathOperator*{\proj}{I-Proj}
\DeclareMathOperator*{\E}{\mathbb{E}}
\DeclareMathOperator*{\sgn}{\mathrm{sgn}}
\newcommand{\Aestk}{\hat A^{\pithetak}}
\newcommand{\DKL}{{D_{\mathrm{KL}}}}
\newcommand{\thetastar}{{\theta^\ast}}
\newcommand{\pithetak}{{\pi_{\theta_k}}}
\newcommand{\pithetastar}{{\pi_{\thetastar}}}
\newcommand{\pithetakplus}{{\pi_{\theta_{k+1}}}}
\newcommand{\piold}{{\pi_{\mathrm{old}}}}
\newcommand{\thetaold}{\theta_{\mathrm{old}}}
\newcommand{\thetanew}{\theta_{\mathrm{new}}}
\newcommand{\MW}{{MW }} 
\newcommand{\cX}{{\mathcal X}}
\newcommand{\cP}{\mathcal{P}}
\newcommand{\cL}{\mathcal{L}}
\newcommand{\LClip}{{\mathcal L}^{\mathrm{CLIP}}}
\newcommand{\LKLorig}{{\mathcal L}^\mathrm{KL, forward}}
\newcommand{\LKLflip}{{\mathcal L}^\mathrm{KL, reverse}}
\newcommand{\grad}{\nabla}
\newcommand{\inner}[2]{\langle #1, #2 \rangle}
\title{Revisiting Design Choices in Proximal Policy Optimization}
\author{
    Chloe Ching-Yun Hsu \quad Celestine Mendler-D\"{u}nner \quad 
     Moritz Hardt 

{\small \{{chloehsu,mendler,hardt}\}@ berkeley.edu}\\ 
 University of California, Berkeley}
\date{}
\begin{document}

\maketitle

\vspace{1.5em}

\begin{abstract}
Proximal Policy Optimization (PPO) is a popular deep policy gradient algorithm.
In standard implementations, PPO regularizes policy updates with clipped probability ratios, and parameterizes policies with either continuous Gaussian distributions or discrete Softmax distributions.
These design choices are widely accepted, and motivated by empirical performance comparisons on MuJoCo and Atari benchmarks.

We revisit these practices outside the regime of current benchmarks, and expose three failure modes of standard PPO. 
We explain why standard design choices are problematic in these cases, and show that alternative choices of surrogate objectives and policy parameterizations can prevent the failure modes.
We hope that our work serves as a reminder that many algorithmic design choices in reinforcement learning are tied to specific simulation environments. We should not implicitly accept these choices as a standard part of a more general algorithm.

\end{abstract}



\section{Introduction}


The PPO algorithm~\cite{schulman2017proximal} is a policy gradient method that is used in diverse high profile reinforcement learning (RL) applications  to train policies, including playing DOTA~\cite{berner2019dota}, manipulating a Rubik's cube~\cite{akkaya2019solving}, and designing chip placement~\cite{mirhoseini2020chip}.

The key feature of the PPO algorithm is a \emph{surrogate objective} for computing policy updates. The surrogate objective regularizes large policy updates, in the spirit of a trust region method, so that each policy update step stays within a close neighborhood around the previous-iteration policy.  This increases the validity of the surrogate objective which is based on data collected from the previous-iteration policy. The algorithm then incrementally refines the policy using multiple steps of stochastic gradient ascent before collecting new data. 

In its full generality PPO refers to a family of algorithms, where the exact choice of the surrogate objective is left as a flexible design choice to the user.
As a consequence, several versions of this algorithm have been proposed in the literature. 
All of them build on the idea of regularizing the distance between the initial and the updated policy, but they differ in their implementation.

A natural approach to incorporate regularization is to use the Kullback-Leibler (KL) divergence between successive policy iterations as a penalty in the surrogate objective.
Such an approach can be theoretically motivated by approximating the Trust Region Policy Optimization~\cite{schulman2015trust} (TRPO) method that is known to monotonically improve policies. 

Rather surprisingly, a different, more ad-hoc surrogate objective has emerged as the most common design choice among practitioners. This alternative surrogate objective is the so-called \textit{clipped objective}. Instead of regularizing the update with a penalty, it greedily ignores any change to the parameter update after the probability ratio between the initial and the updated policy exceeds a predefined threshold. This heuristic is easy to implement and has demonstrated good empirical performance on a number of MuJoCo benchmarks~\cite{schulman2017proximal}. Today, most implementations follow this design choice and it is considered a standard part of PPO -- to the extent that it is often not even mentioned in experimental setups.

The choice of surrogate model is only one example for a standard design choice used to learn policies with PPO.
Another example is the distribution family used for policy parameterization. Standard PPO implementations use diagonal Gaussian distributions on continuous action spaces, and categorical Softmax distributions on discrete action spaces. Alternative families of distributions have been proposed in the literature in the context of other RL algorithms, and there is no reason to rule them out a priori for PPO.

In this paper we revisit these standard design choices for the PPO algorithm. 
Our main concern is that they have been made within the limited regime of current benchmarks, in particular, MuJoCo and Atari. 
It is not a priori clear how robust these methods are when used in different environments. Our goal is to contribute to a more principled understanding of PPO. We point out potential issues with current design choices, investigate alternative design choices, and propose avenues for improving the PPO method.

\subsection{Our contributions}

We design simple test cases to isolate three failure modes where standard PPO (using the clipped objective and Gaussian/Softmax policy parameterization) does not achieve the desired convergence behavior:
On continuous action spaces, standard PPO is unstable when rewards vanish outside bounded support, and it is sensitive to initialization when there are locally optimal actions close to initialization. On discrete action spaces with sparse high rewards, standard PPO often gets stuck at suboptimal actions.
We analyze the reason for these failure modes and explain why they are not exposed by standard benchmarks.

We then revisit alternative design choices and evaluate their performance in these situations. 
On discrete action spaces, we revisit the more principled KL-regularized surrogate objectives, and show that they make PPO more robust to our failure mode example. In fact, KL-regularized PPO even comes with convergence guarantees for one of the settings, and concerningly this favorable property is not preserved by the clipping heuristic.

On continuous action spaces, we find that policy parameterization with Beta distributions is a favorable combination with PPO, because it is more robust to outliers and can be initialized uniformly.
Beta policies avoid the identified failure modes, while also significantly improving PPO performance on MuJoCo environments. 
For example, PPO with beta policy achieves 2x cumulative rewards on the OpenAI Gym~\cite{brockman2016openai} Humanoid-v2 task.

In summary, our study suggests that Beta policy parameterization and KL-regularized objectives should be reconsidered for PPO, especially when moving outside the regime of current benchmarks. 
While we do not claim that our proposed alternatives improve PPO in all settings, we show that the community might have ruled out these variants too soon based on early MuJoCo experiments. 
We hope this paper serves as a reminder that many algorithmic design choices in RL are decided based on specific simulation environments. Without a deeper understanding, we need to reassess these conclusions in new environments.

\section{Background and related work}
\label{sec:background}

We start by providing some background on the PPO algorithm and then introduce the different design choices that we investigate and compare in this manuscript.

\subsection{Proximal Policy Optimization}

PPO is a policy gradient algorithm that learns a parameterized policy $\pi_\theta$. It iteratively updates the parameters $\theta$ of the policy by solving a local optimization problem:
\begin{equation}\thetanew\; \leftarrow \;\argmax_\theta\; \mathcal L(\theta; \hat A^\piold).
\label{eq:obj}
\end{equation}
The surrogate objective $\mathcal L$ uses the advantage estimates $\hat A^{\piold}(s,a)\;\forall s,a$ to assess how much better a particular action $a$ is on state $s$ compared to a randomly sampled action from the previous-iteration policy $\piold(\cdot|s)$. These estimates are obtained by sampling trajectories from $\piold$ prior to each optimization step.
Based on these estimates, the surrogate objective $\mathcal L$ is then optimized to find a new parameter vector $\thetanew$ for the policy $\pi_\theta$.
The optimization \eqref{eq:obj} is not performed exactly in PPO, but using multiple epochs of stochastic gradient ascent.

\begin{figure}[t]
\setlength{\fboxsep}{2mm}
\begin{center}
\small
\color{gray}
\begin{boxedminipage}{\columnwidth}
\color{black}
\vspace{-0.2cm}
\begin{subequations}
\begin{align}
&\LClip\,(\theta) &&:=\E_{a,s\sim\piold}\left[\min\left(\frac{\pi_{\theta}(a|s)}{\piold(a|s)} \;\hat A^{\piold}(a,s),\; \textrm{clip}\left(\frac{\pi_{\theta}(a|s)}{\piold(a|s)}, 1-\epsilon, 1+\epsilon\right)\hat A^\piold(a,s)\right)\right]\label{eq:Lclip}\\
&\LKLorig\,(\theta) && :=\E_{a,s\sim\piold}\left[\frac{\pi_{\theta}(a|s)}{\piold(a|s)} \hat A^{\piold}(a,s)\right] - \beta\DKL(\piold || \pi_{\theta})\label{eq:Lforward}\\
&\LKLflip\,(\theta) && :=\E_{a,s\sim\piold}\left[\frac{\pi_{\theta}(a|s)}{\piold(a|s)} \hat A^{\piold}(a,s)\right] - \beta\DKL(\pi_\theta || \piold)\label{eq:Lreverse}
\end{align}
\end{subequations}
\end{boxedminipage}
\end{center}
\vspace{-0.2cm}
\caption{Design choices for PPO surrogate objective.}
\label{fig:ppo_variants}
\end{figure}

\begin{figure}[t]
\setlength{\fboxsep}{2mm}
\begin{center}
\small
\color{gray}
\begin{boxedminipage}{\columnwidth}
\color{black}
\vspace{-0.2cm}
\begin{subequations}
\begin{align}
&\text{Gaussian}&\pi_\theta(a|s)&:=\mathcal N\left(\mu_\theta(s), \sigma_\theta^2(s)\right)\label{eq:gauss}\\
&\text{Beta}&\pi_\theta(a|s)& := f\left(\frac{a-l}{r-l},\alpha_\theta(s), \beta_\theta(s)\right)\quad \text{ with }\quad f(x,\alpha, \beta):=\frac{\Gamma(\alpha+\beta)}{\Gamma(\alpha)\Gamma(\beta)} x^{\alpha-1}(1-x^{\beta-1})\label{eq:beta}\\
&\text{Softmax}&\pi_\theta(a|s)&:=\frac 1 {c_s} e^{\phi_\theta(s,a)} \quad\text{ with }\quad c_s=\sum_{a'\in\mathcal{A}}e^{\phi_\theta(s,a')}\label{eq:softmax}
\end{align}
\end{subequations}
\end{boxedminipage}
\end{center}
\vspace{-0.2cm}
\caption{Design choices for PPO policy parameterization.}
\label{fig:policy_variants}
\end{figure}

\subsubsection{Policy regularization}

The \emph{surrogate objective} $\cL$ in \eqref{eq:obj} is the key feature of PPO, as it regularizes excessively large policy updates and allows the algorithm to efficiently reuse available data.
There are different variants to implement this regularization, we focus on the three primary variants summarized in Figure~\ref{fig:ppo_variants}.

The first objective $\LClip$ in \eqref{eq:Lclip} corresponds to the clipping heuristic proposed in~\cite{schulman2017proximal}. It  is simple to implement and uses a clipping threshold $\epsilon>0$ to control the size of each policy update. This objective is most popular amongst practitioners because of its good benchmark performance.

The second objective $\LKLorig$  in \eqref{eq:Lforward} uses a soft constraint on the forward KL distance between the initial and the updated policy. The regularization strength is controlled by the regularization parameter $\beta$. This objective was also proposed in \cite{schulman2017proximal} and is closely related to TRPO~\cite{schulman2015trust}, a well understood trust region algorithm related to PPO. 

The third objective $\LKLflip$  in \eqref{eq:Lreverse} uses the reversed KL-distance for regularization. 
This objective has not been studied empirically but has been the main focus of theoretical studies around PPO~\cite{neu2017unified, liu2019neural,geist2019theory}. 
The reason is its close connection to  mirror descent, a well understood optimization method with provable convergence guarantees.

To make the connection between KL-regularized PPO and mirror descent formal, Liu et al.~\cite{liu2019neural} use the representation power of overparameterized neural networks to approximate the infinite-dimensional mirror descent updates, while other works focus on MDPs with finite state and action spaces.
We review and extend these theoretical studies connecting PPO to mirror descent in Appendix~\ref{sec:ppo_connections}. We simplify their exposition and clarify that the assumptions in the convergence analyses of~\cite{liu2019neural,neu2017unified} are satisfied only for certain families of policy parameterizations. These refined theoretical insights help us draw new intuition on the convergence properties of the PPO variants, and how the choice of policy parameterization affects PPO behavior.

\subsubsection{Policy parameterization} 

Standard PPO implementations use diagonal Gaussian distributions for parameterizing the policy $\pi_\theta$ on continuous action spaces, and Softmax distributions on discrete action spaces. In this paper we consider Beta distributions as an alternative parameterization for continuous action spaces. The respective policies $\pi_\theta(a|s)$ are stated in Figure~\ref{fig:policy_variants}.

Beta policy parameterizations have previously been proposed in~\cite{pmlr-v70-chou17a} for the TRPO algorithm. The authors chose Beta policy parameterizations because they can explicitly incorporate action space boundaries $a\in[l,r]$ and eliminate the biased boundary effects caused by truncated Gaussians. In this work we demonstrate that this is not the only benefit of using Beta parameterization, it also leads to more reliable convergence behavior of the PPO algorithm in our test cases and can outperform standard PPO even in settings where boundary effects are not relevant.

\subsection{Empirical studies}
Previous works have already expressed concerns about the robustness of standard PPO. Henderson et al.~\cite{henderson2018deep} highlight the concern that RL algorithm comparisons depend on the environments, and show that PPO and other deep RL algorithms are sensitive to random initialization and reward scaling.
Our work complements previous work with more in-depth analysis of standard PPO's lack of robustness in relation to the two design choices: policy parameterization and surrogate objective.

Recent ablations studies~\cite{engstrom2020implementation,andrychowicz2020matters} investigate how some hyperparameters and design choices affect the performance of PPO and other on-policy RL algorithms on MuJoCo benchmarks.
While these studies provide empirical results on how to optimize PPO performance on MuJoCo benchmarks, our work takes a first step in examining failure modes of standard PPO outside of current benchmarks.
Our work also differs from existing empirical work on the studied design choices. Motivated by the discovered failure modes of standard PPO, we choose to focus on policy parameterizations and surrogate objectives. While the alternative design choices we study have been previously proposed, their impact on PPO performance has been largely unknown and unquestioned. For example, in a recent large-scale study~\cite{andrychowicz2020matters} of more than 50 design choices in on-policy RL algorithms, the choices of clipping as regularization and Gaussian policy are not included in the studied design choices.


\section{Failure modes of standard PPO}
\label{sec:objs}

We start by outlining three failure modes of the common combination of clipped surrogate objective with continuous Gaussian and discrete Softmax policy parameterization. We refer to these design choices as \emph{standard PPO}.

\subsection{Reward signal with bounded support}
\label{sec:non-smooth}

The first failure mode illustrates a scenario where the clipped objective fails to recover from a bad Gaussian policy update step. The clipping mechanism effectively prevents the policy from moving further away once it is outside the trust region, but it does not bound the size of an individual policy update step. This behavior is particularly problematic if a single reward signal can cause the policy to end up in regions with low reward signal. To illustrate this issue, consider the following toy example where the support of the reward signal is bounded:

\begin{figure}[t]
    \subfloat[Reward landscape over action space (left) and heatmap of learned Gaussian policy density across iterations for one training run (right).]{
    \centering
    \includegraphics[width=0.62\linewidth, valign=m]{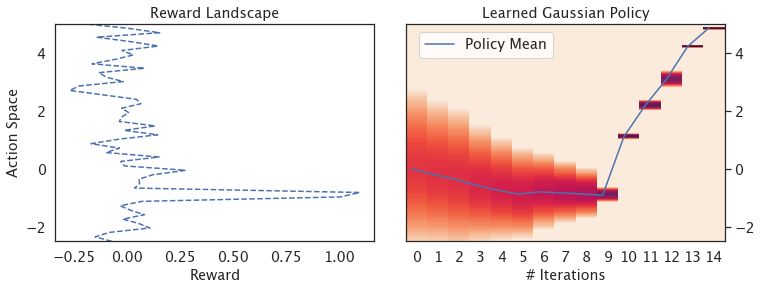}
\label{fig:1dheatmap}
    }
    ~\hspace{1em}~
    \subfloat[Reward of learned Gaussian policies, with mean and std over 20 runs.]{
        \centering
        \includegraphics[width=0.33\linewidth, valign=m]{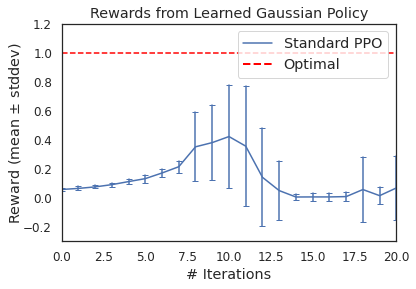}
\label{fig:1dreward}
    }
    \caption{\textit{Failure mode \ref{f1}.} Policy training on the 1-d example outlined in Section~\ref{sec:non-smooth} using standard PPO with Gaussian policy parameterization. 
Initially the policy moves towards the high reward peak, but after 10 iterations the policy drifts away and the reward drops. }
    \label{fig:cts_bandit_failure}
\end{figure}

\begin{failuremode}
\label{f1}
Consider the simple 1-dimensional example on a continuous action space illustrated in Figure~\ref{fig:1dheatmap}. The reward landscape, visualized in the left plot, has a single peak of high reward around $a=-0.9$ and is purely random noise outside the interval $[-1.0, -0.8]$.
For this simple example we illustrate the training of a Gaussian policy with the clipped objective  in the right plot. We see that in the initial phase of training the policy $\pi_\theta$ quickly moves towards the high-reward peak and concentrates by rapidly decreasing the standard deviation. But then, at iteration 10,  it starts to diverge and drifts away into low reward regions. The experienced reward across iterations is shown in Figure~\ref{fig:1dreward}.
\end{failuremode}

The undesirable behavior of Failure mode \ref{f1} can reliably be reproduced in similar 1-dimensional settings, where reward signals are bounded to a subregion of the action space. 
The abruptly vanishing reward signal outside the interval $[-1.0, -0.8]$ is the main culprit. As we will explain it is particularly problematic for the combination of the clipped objective with the Gaussian policy. 
Later, in Section  \ref{sec:fix}, we demonstrate that PPO with the KL-regularized surrogate objective or with alternative policy parameterization can better handle this failure mode example.

\textit{Why is the combination of Gaussian policy parameterization and clipping problematic for this example? }
To better understand why this undesirable behavior occurs, let us inspect the gradient of the clipped objective \eqref{eq:Lclip}. To simplify notation we denote the probability ratio by $r(a|s):=\pi_\theta(a|s)/\piold(a|s)$ and the advantage estimates of action $a$ at state $s$ as $\hat A_{a,s}:=\hat A^\piold(a,s)$.
\begin{equation}
    \nabla_\theta \LClip(\theta) = \E_{a,s\sim\piold} \left[ 
	1 \left\{ \left|r(a|s) - 1\right| < \epsilon 
        \text{ or } \sgn\left(r(a|s) -1\right) \neq \sgn(\hat A_{a,s}) \right\}\;
        r(a|s)   \nabla_\theta \log\pi_\theta(a|s) \hat A_{a,s}
          \right].
    \label{eq:clip_obj_grad}
\end{equation}
The gradient update \eqref{eq:clip_obj_grad} is non-zero as long as the probability ration $r(a|s)$ deviates from $1$ by less than $\epsilon$. As soon as this threshold is exceeded, the clipping mechanism effectively prevents the policy from moving futher away in subsequent iterations. However, it does not regularize individual policy update steps to stay inside the trust region. 
Thus, if update steps are large, a single reward signal can cause the policy to move far away from $\piold$. 
In Failure mode \ref{f1} it is particularly problematic if the policy suddenly jump outside the support of the reward signal. Sampling from such a policy in the next round may contain little signal to recover.

Of course, the strength of the above phenomena critically depend on the choice of policy parameterization. In the following we will explain why a standard Gaussian policy parameterization $\theta = (\mu_\theta, \sigma_\theta^2)$ such as in \eqref{eq:gauss} can be problematic. 

Therefore, recall the gradient update in \eqref{eq:clip_obj_grad}. The advantage value $\hat A_{a,s}$ is weighted by the score function $\nabla_\theta \log\pi_\theta(a|s)$ which measures how quickly the probability density at a given action changes as the distribution parameters change. For Gaussian policies the score function is given by 
\[\nabla_\theta \log\pi_\theta(a|s) =\frac 1 {\sigma_\theta^2(s)} \left[{(a-\mu_\theta(s))}\quad \frac{(a-\mu_\theta(s))^2 - \sigma_\theta(s)^2}{\sigma_\theta(s)}  \right].\]
The expression is large for actions further away from the Gaussian policy mean $\mu$ and grows inversely proportional to the variance of the Gaussian policy. The score function is visualized in Figure~\ref{fig:cts_bandit_failure_explanation} in Section \ref{sec:advKL}. As the policy variance decreases over the course of training, the policy becomes more and more sensitive to actions in the tails. Hence, in later iterations, a single reward signal relatively far from the policy mean can lead to a large policy update and cause the policy to suddenly jump to low reward regions.
Without meaningful reward signals, the mean and the standard deviation of the Gaussian policy fluctuate randomly due to noise.
Once this policy is used to sample new data, the small standard deviation makes it extremely unlikely for PPO to re-discover the high-reward interval because it under-explores the reward landscape. 

\begin{figure}[t]
    \centering
\includegraphics[width=0.95\linewidth,valign=m]{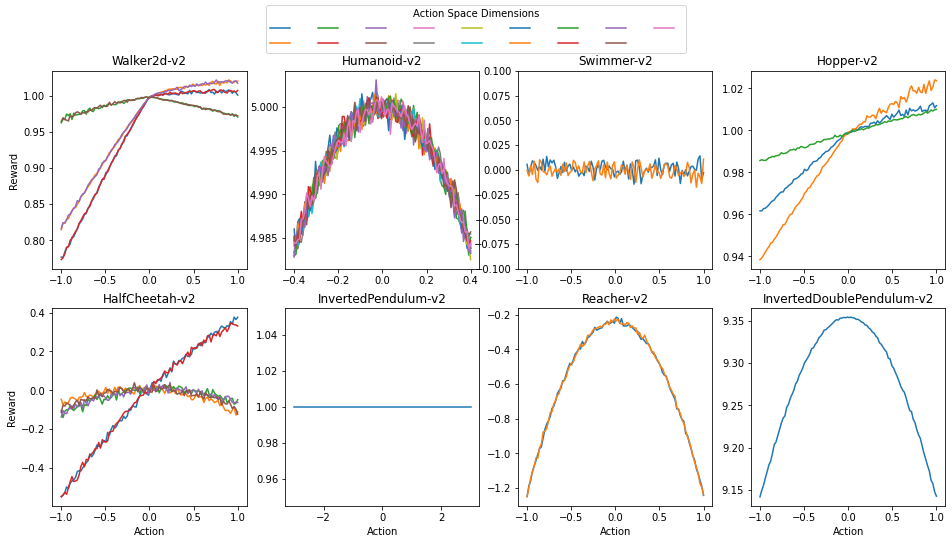}
        \caption{Reward landscapes on initial states of common OpenAI Gym MuJoCo benchmarks. 
}
\label{fig:reward_landscape}

\end{figure}

\textit{Why haven't we observed this failure mode on MuJoCo benchmarks?} This failure mode is hidden for several reasons. First, OpenAI Gym provides designed action space boundaires for each task to facilitate learning and clipps sampled actions outside of the boundaries. Second, most reward landscapes are smooth (Figure~\ref{fig:reward_landscape}), unlike our failure mode example. Third, in many PPO implementations, the variance in the Gaussian policy is a state-independent variable. While state-independent variance avoids the variance to shrink quickly, it also limits the expressive power of the policy, forcing the policy to sample suboptimal actions even on states with higher confidence.

Although failure mode \ref{f1} is only a toy example, reward signals with narrow bounded support are of practical relevance. Such situations can arise due to multiple reasons, such as unknown action boundaries and sparse rewards. If we do not know a priori where meaningful reward signals lie in the action space prior to policy training, we have to set wider action space boundaries that might include vanishing reward signals.
Even with knowledge of action space boundaries, sparse rewards can also result in limited reward signals, such as in robotics, where the robot only receives a high reward for completing a task such as inserting a cable or stacking two blocks~\cite{riedmiller2018learning,vecerik2017leveraging}.

\subsection{High-dimensional discrete action spaces}

\begin{figure}
\begin{adjustwidth}{-.5in}{-.5in}
    \subfloat{
    \centering
    \includegraphics[width=0.4\textwidth]{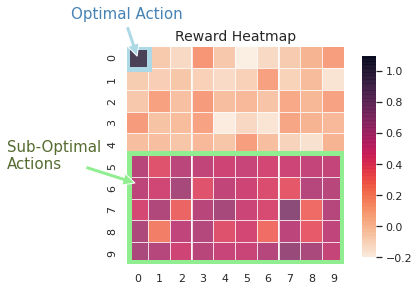}
    }
    \subfloat{
    \centering
    \includegraphics[width=0.65\textwidth]{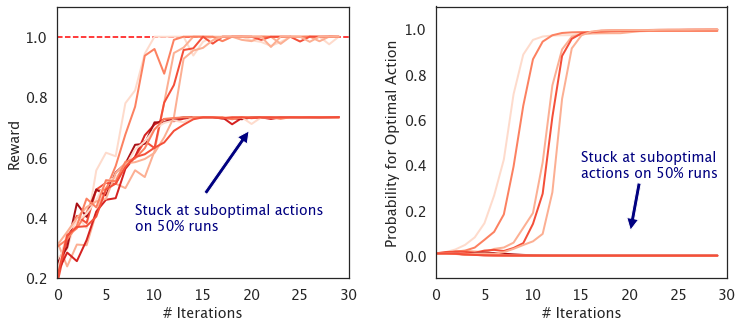}
    }
\end{adjustwidth}
    \caption{ \textit{Failure mode \ref{f3}:} Half of the actions have zero reward, half have suboptimal rewards around 0.5, and only one optimal action has average reward 1. In this discretized environment with 100 actions, {PPO with the clipped objective gets stuck at suboptimal actions around half of the times.}}
    \label{fig:disc_bandit_failure}
\end{figure}

In the second failure mode we illustrate an example where clipping converges to suboptimal actions for distrecte action spaces with the standard softmax policy parameterization. This failure mode is particularly worrying because it happens within the convergence regime of the more principled KL-regularized PPO which clipping aims to approximate. We consider the following setup:

\begin{failuremode}
\label{f3}
Let half of the actions have zero reward, half of the actions have suboptimal reward around 0.5, and there is only one optimal action with average reward 1. This example is  illustrated in Figure~\ref{fig:disc_bandit_failure}. It shows how the clipped objective is likely to converge to suboptimal actions with discrete Softmax policy. For such an environment with 100 actions, standard PPO gets stuck at the suboptimal actions in around 50\% of the runs.
\end{failuremode}

The problem is that when the clipped objective sees only the bad actions (reward 0) and the suboptimal actions (reward 0.5) without seeing the optimal action, it tends to increase the probability ratio of the suboptimal actions by $(1+\epsilon)$, as maximally permitted by the clipping mechanism. After increasing the probability of suboptimal actions in several iterations, the policy is less and less likely to sample the optimal action, and further increases the suboptimal action probabilities.

When testing standard PPO in similar environments with varying number of actions (Figure~\ref{fig:disc_bandit_comp}) from 10 to 100, we see that this failure mode is increasingly problematic as the number of actions increases because the probability of seeing the optimal action in each iteration decreases.

We further note that alternative approaches such as reducing learning rate and increasing batch size can only partially mitigate the issue. For example, when reducing the learning rate by $100\times$ from 0.1 to 0.001, PPO with clipping still fails on 15\% runs when action dimension is 100, while PPO with reverse KL regularization succeeds on all 20 runs. Moreover, reducing the learning rate or increasing the batch size is not an ideal fix as it slows down training and requires more samples. Even though large batch sizes (larger than size of state-action space) can naively fix this failure mode on the simple single-state bandit example, it does not scale to more complex environments. When the state space $\mathcal{S}$ is large, sampling the full action space $\mathcal{A}$ on all states would require a batch size of $|\mathcal{S}| \times |\mathcal{A}|$. Even with relatively large batch sizes in more complex environments, we can not realistically hope to see all actions on each state.

Instead of masking the underlying issue with partial fixes, understanding and analyzing failure modes in a principled way is a necessary first step towards designing robust fixes. In Section~\ref{sec:fix} we show that using KL regularization as an alternative surrogate model provides a scalable and theoretically grounded fix for this failure mode.
It is interesting to see that some alternative algorithm design choices can  be provably more robust, and we hope to encourage the community to keep questioning established algorithms, in particular for new environments.

While the high-dimensionality of action spaces is one aspect of the classical exploration-exploitation tradeoff, existing RL research around exploration mostly focuses on continuous, rather than discrete action space~\cite{osband2016deep, tang2017exploration, plappert2017parameter, osband2019deep}. 
Atari games in OpenAI Gym have 3-18 discrete actions (see Table~\ref{tab:atari_action_dims} in the appendix for the exact statistics). They are unable to reveil this failure mode and assess PPOs performance on higher-dimensional discrete action spaces.

\subsection{Locally optimal actions close to initialization}
\label{sec:suboptimal_failure_mode}

The third example shows that standard PPO can converge to suboptimal actions, even close to initialization. While convergence to suboptimal solutions is a general problem in RL, this example illustrates that for standard Gaussian policy parameterization this is doubly problematic as its suboptimal convergence is additionally aggravated by the initialization. 

\begin{failuremode}
\label{f2}
Consider a simple 1-dimensional example with two reward peaks, as illustrated in Figure~\ref{fig:double_peak_failurea}; one suboptimal peak centered at $+1$ closer to the initial distribution $\mathcal N(0, 1)$, and one optimal peak centered at $-2$ further away from the initial distribution. 
In this example, we see that the policy converges to the suboptimal action around $+1$, as visualized by the heatmap in Figure~\ref{fig:double_peak_failurea}. 
\end{failuremode}

The suboptimal convergence in this example is due to the shape of the initial Gaussian distribution, which is typically set to a diagonal Gaussian policy with zero mean and unit variance in all dimensions. In the initial Gaussian distribution, even though the gradients are of smaller magnitude around the suboptimal peak, they occupy more probability mass and hence cause the aggregate gradient to move towards the suboptimal peak close to $0$. 

\begin{figure}[t]
    \subfloat[Gaussian policy]{
\centering
    \includegraphics[width=0.7\linewidth,valign=m]{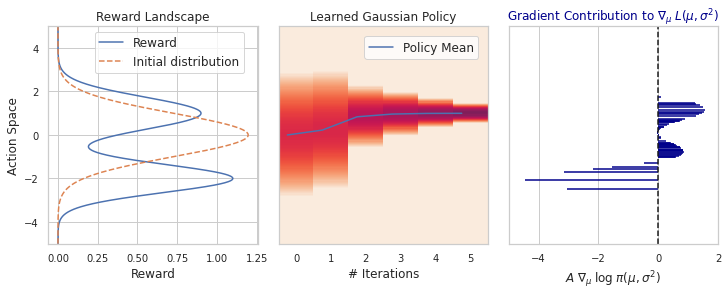}
\label{fig:double_peak_failurea}
}~\hspace{1em}~
    \subfloat[Beta policy]{
    \centering 
    \includegraphics[width=0.23\linewidth,valign=m]{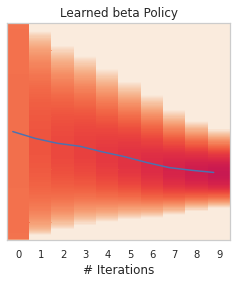}
\label{fig:double_peak_beta}
    }
    \caption{\textit{Failure mode \ref{f2}.} A double peak smooth reward landscape. (a) Suboptimal actions close to initialization are problematic for PPO with clipped objectives and Gaussian policies, as the suboptimal peak occupies more mass in the initial distribution and weighs aggregated gradients towards the suboptimal peak. (b) Beta policy with uniform initialization learns the optimal reward peak.}
    \label{fig:double_peak_failure}
\end{figure}

This failure mode might be alleviated by a careful choice of initialization. This, however, would require a priori knowledge of the reward landscape which is often not reasonable in practice, and if available there would be more effective ways to exploit it. 

In general, this particular issue is not commonly discussed in the literature because it does not arise in the popular OpenAI Gym~\cite{brockman2016openai} implementation of MuJoCo tasks. There, each environment comes with pre-defined action space boundaries and within the action space boundaries most tasks have single-peak reward landscapes, see Figure~\ref{fig:reward_landscape}. 

In Section~\ref{sec:fix} we demonstrate that  Beta policy parameterization can help avoid this undesirable convergence of PPO to suboptimal actions close to initialization, since it can be initialized uniformly in the action space. Of course, there are cases where both Gaussian and Beta policies would converge sub-optimally, not caused by initialization. For example, when there is a very wide action region with relatively high reward and a very narrow action region with the highest reward, both Gaussian and beta policies tend to converge to the suboptimal wide region.


\section{Improvements for PPO failure modes}
\label{sec:fix}
We demonstrate that the KL-regularized surrogate objective and Beta policy parameterizations provide two simple alternatives to avoid the three failure modes of standard PPO discussed in Section \ref{sec:objs}. While both alternative design choices have been previously proposed in the RL literature, they have been largely ignored in combination with the PPO algorithm.

\subsection{KL-regularized surrogate objectives}
We revisit the KL-regularized PPO surrogate objectives \eqref{eq:Lforward} and \eqref{eq:Lreverse} as an alternative to the clipped surrogate objective \eqref{eq:Lclip}, motivated by its close connection to the mirror descent algorithm and the resulting theoretical guarantees.
We show that they make PPO more robust to Failure mode \ref{f1} and \ref{f3}, while having comparable MuJoCo performance.

\subsubsection{Clarification of KL direction}
The KL divergence is asymmetric and it remains unclear from the literature in which direction the KL divergence should be computed for regularizing the PPO surrogate objective. 
There are two possible versions of the KL-regularized objective, \textit{forward-KL} \eqref{eq:Lforward} and \textit{reverse-KL} \eqref{eq:Lreverse}. We are not aware of any empirical comparison of their performance. The forward direction is evaluated in the original PPO paper~\cite{schulman2017proximal} but the reverse direction has solely been studied theoretically.

The KL-divergence is approximately symmetric throughout the algorithm and the KL direction does not significantly affect PPO's performance, due to the regularizing properties of the surrogate objective used in PPO.
To verify, we use the Taylor expansion
$
     r(a|s) \log(r(a|s)) \approx 
     r(a|s) - 1 + \frac{1}{2}(r(a|s) -
     1)^2,
$
and the identity $\int_x \pi_\theta(x) \nabla_\theta \log(\pi_\theta(x)) = 0$, to quantify the difference between the two KL terms:
\begin{equation*}
    \nabla_\theta \DKL(\pi_\theta\;||\;\piold) - \nabla_\theta
    \DKL(\piold\;||\;\pi_\theta) \approx
    \frac{1}{2}\E_{a,s\sim\piold} \left[|r(a|s) - 1|^2 \nabla_\theta(\log \pi_\theta(a|s)) \right].
\end{equation*}
The difference between the two KL-regularized objectives is small in practice, since PPO regularizes the probability ratio $r(a|s)$ to be close to 1, 
Indeed, the correlation between $\DKL(\pi_\theta\;||\;\piold)$ and $ \DKL(\piold\;||\;\pi_\theta)$ is close to 1 throughout the training of common benchmark tasks, as illustrated in Figure~\ref{fig:kl_correlation} in the appendix. As a consequence, PPO with either KL penalty is approximately equivalent to natural policy gradient (NPG)~\cite{kakade2002natural} (explained in Appendix~\ref{sec:ppo_connections}), and in practice often result in similar performance. 

\begin{figure}[t]
    \begin{minipage}[t]{0.47\textwidth}
    \centering
    \vspace{-1em}
    \includegraphics[height=0.65\linewidth, valign=t]{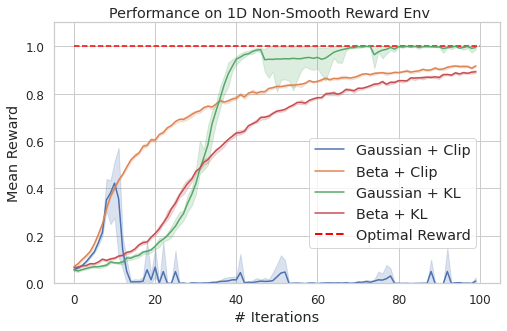}
        \caption{Beta policy and KL regularization both independently avoid Failure mode \ref{f1} (Figure~\ref{fig:cts_bandit_failure}, continuous bandit with bounded reward signals). 
        }
    \label{fig:cts_bandit_comp}
    \end{minipage}\hfill
    \begin{minipage}[t]{0.47\textwidth}
    \centering
    \includegraphics[height=0.65\linewidth,valign=t]{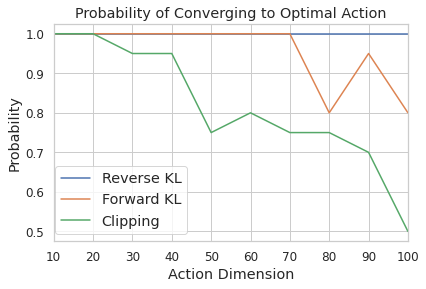}
        \vspace{-0.2em}
        \caption{KL-regularized objectives are more robust in Failure mode \ref{f3} (Figure~\ref{fig:disc_bandit_failure}, high-dimensional discrete action space).
}
    \label{fig:disc_bandit_comp}
    \end{minipage}
\end{figure}

\subsubsection{Advantages of KL-regularization}
\label{sec:advKL}

\textit{Fixing Failure mode \ref{f3}.}
KL-regularized PPO enjoys convergence guarantees when the parameterized policy class is closed under mixture~\cite{liu2019neural}. This includes finite discrete action spaces such as in the example of Failure mode \ref{f3}.
We verify in Figure~\ref{fig:disc_bandit_comp} that KL-regularized PPO indeed converges reliably on this example. 
KL-regularized PPO more effectively avoids suboptimal actions compared to  standard PPO, especially as the number of actions increases. 
When comparing the two KL directions, the theoretically principled reverse-KL penalty (see Appendix~\ref{sec:convergence} for convergence guarantees) has a slight advantage over the forward-KL penalty.

Importantly, the convergence guarantees of PPO for KL-regularized surrogate objectives~\cite{liu2019neural} are not preserved when using the standard clipping heuristic. Failure mode \ref{f3} (Figure~\ref{fig:disc_bandit_failure} and Figure~\ref{fig:disc_bandit_comp}) illustrates one simple example where standard PPO fails even inside the regime of existing theoretical guarantees for KL-regularized PPO. 

The myopic convergence of standard PPO compounds with the sampling effects in exploring high-dimensional action spaces.
To clarify, while KL-regularized PPO always converges optimally as we vary the action dimension from 10 to 100 in Figure~\ref{fig:disc_bandit_comp}, it is not expected that the convergence probability will stay at 1 as we arbitrarily increase the dimensions. This does not violate the theoretical regret bound for KL-regularized PPO, since the regret bound is based on the samples that the policy sees during training. When the action dimension is large compared to the batch size, the optimal action is very unlikely to be sampled, and hence the regret bound does not imply perfect convergence. Empirically, when we further increase the action dimension to 1000, KL-regularized PPO converges to optimal action on around 30\% of the runs, compared to less than 10\% for PPO with clipping.

\textit{Fixing Failure mode \ref{f1}.}
KL-regularized PPO also mitigates Failure mode \ref{f1} on continuous action spaces, as illustrated in Figure~\ref{fig:cts_bandit_comp}. Even though its formal convergence guarantees do not extend to parameterized continuous action spaces with Gaussian policies, it is more effective in  preserving the trust region, compared to clipping. As pointed out in Section \ref{sec:non-smooth}, the clipped objective gets stuck once a large stochastic gradient step is taken and does not provide a reliable mechanism for the policy to come back to the trust region in future steps of the same round. In contrast, a large gradient step with the KL-regularized objective would cause large opposite gradients from the KL penalty, forcing the policy to move back closer to the old policy, before sampling new data.

\textit{MuJoCo performance.}
Empirical results on MuJoCo tasks show that KL-regularized PPO has comparable performance to clipped PPO when in combination with Beta policy. As expected, KL direction does not significantly affect PPO performance, neither for Gaussian nor for Beta policies. The corresponding experimental results can be found in Figure~\ref{fig:kl_direction_comp} and Figure~\ref{fig:clipping_kl_mujoco_comp_beta} in the appendix.

\begin{figure}[t]
    \centering
    \includegraphics[width=0.9\textwidth]{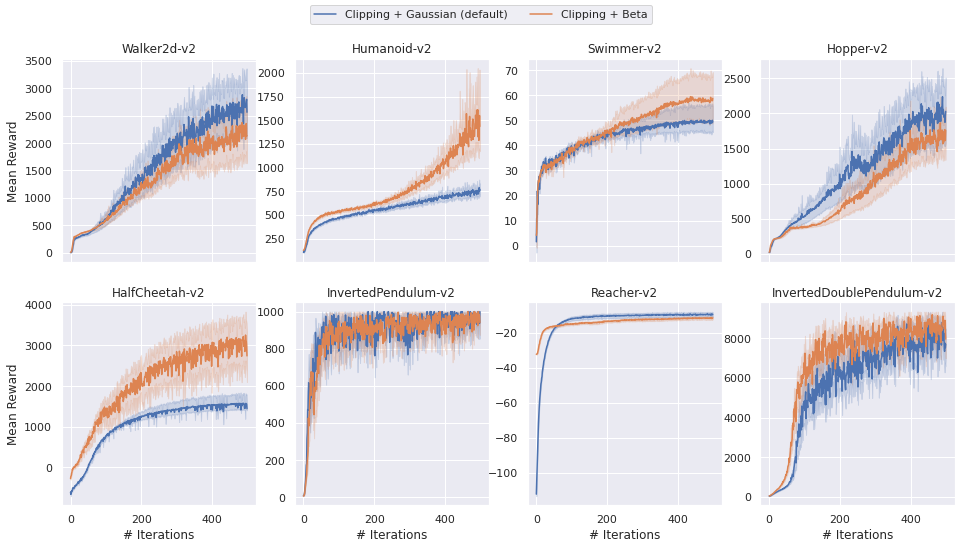}
    \caption{Beta policy significantly improves PPO performance on several MuJoCo tasks compared to Gaussian policy, while maintaining similar performance on other tasks. Most remarkably, Beta policy achieves 2x cumulative reward on Humanoid.}
    \label{fig:beta_gaussian_mujoco_comp_full}
\end{figure}

\subsection{Beta policy parameterization on continuous action spaces}

A second, stand alone fix for the continuous Failure modes \ref{f1} and \ref{f2}, is to use Beta policy parameterizations instead of Gaussians. 
The Beta distribution \eqref{eq:beta} is defined on a bounded interval, and parameterized by two shape parameters $\alpha$ and $\beta$.
Following~\cite{pmlr-v70-chou17a}, we model $\alpha, \beta$ by passing neural network outputs through $x \mapsto \log(1+\exp(x)) + 1$ to ensure $\alpha, \beta \geq 1$.
One might think of the finite support of Beta policies as a restriction, but our evidence in Section~\ref{sec:suboptimal_failure_mode} has shown that Gaussian policies also require prior knowledge of the reward landscape to be initialized properly. As we will see, compared to Gaussian parameterization, Beta policy parameterization utilizes such prior knowledge more effectively.

A number of existing high-profile PPO applications artificially discretize continuous action spaces~\cite{berner2019dota, akkaya2019solving, mirhoseini2020chip} and use Softmax policy parameterization, possibly to avoid the failure modes with Gaussian policies. While discretizing the action space can effectively avoid the failure modes, it largely increases the number of policy parameters, especially in high-dimensional spaces. Beta policies, on the other hand, avoid the failure modes more efficiently without increasing the number of policy parameters. 

\begin{figure}[t]
\label{fig:score}
    \centering
\subfloat[Gaussian policy parameterization]{
\label{fig:cts_bandit_failure_explanation}
\includegraphics[width=0.9\linewidth]{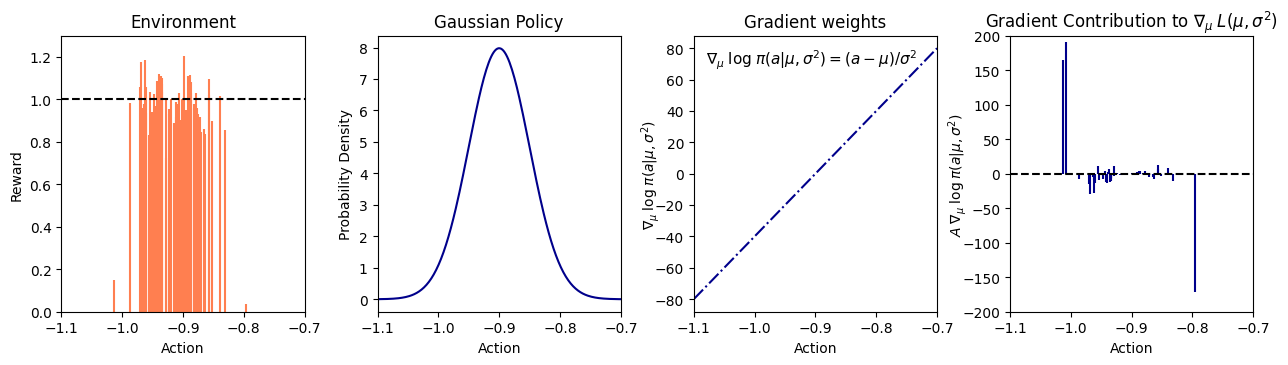}}\\
\subfloat[Beta policy parameterization]{
    \label{fig:cts_bandit_failure_explanation_beta}
    \includegraphics[width=0.9\linewidth]{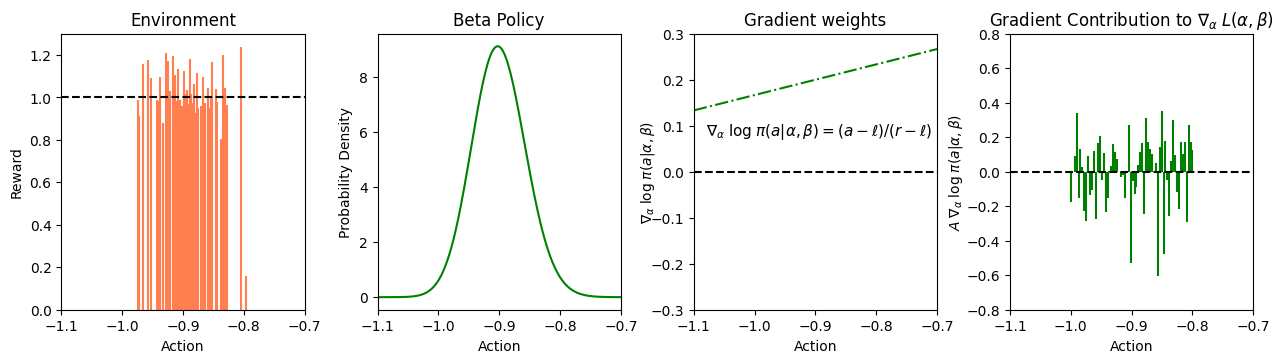}}
    \caption{A single PPO iteration in Failure mode \ref{f1} (Figure~\ref{fig:cts_bandit_failure}) for Gaussian policies (a) and Beta policies (b), with wide action boundary $[-3, 0]$ for fair comparison. We show: Sampled actions and their rewards (left), policy action distribution (middle left), weighting function of action samples in the aggregated gradients (middle right), and each action sample's contribution to the aggregated gradient (right). While the beta policy illustrated here has approximately the same shape as the Gaussian policy, the beta parameterization leads to more even weighting of the gradients across the actions sampled, thus preventing excessively large gradient updates caused by a few actions in the tail.}
\end{figure}

\textit{Fixing failure mode \ref{f1}.}
In the first failure mode (Section~\ref{sec:non-smooth}), the key problem is that Gaussian policies excessively upweigh actions in the tails, especially when the policy standard deviation is small.
For Beta policy parameterization the likelihood of given actions in the tails do not change as rapidly and hence it weighs action samples more evenly, see Figure~\ref{fig:cts_bandit_failure_explanation_beta} in comparison to Figure~\ref{fig:cts_bandit_failure_explanation}. This effectively avoids the first failure mode as demonstrated in Figure \ref{fig:cts_bandit_comp}.

Note that both, the KL-regularized surrogate objective discussed in Section \ref{sec:advKL}, and Beta policy parameterization, are stand-alone fixes for failure mode \ref{f1} through different mechanisms. While the KL-regularized surrogate objective does not change the steep weighting of gradients by the Gaussian score function, it more effectively preserves the trust region, compared to clipping and thereby reduceds the effect of large policy updates caused by actions in the tails.

To eliminate boundary effects, we intentionally set the action boundaries to be broader than the non-zero reward region. Hence, the gain of Beta policy observed in our example is orthogonal to the advantages of Beta policy related to boundary effects pointed out in~\cite{pmlr-v70-chou17a}.

\textit{Fixing failure mode \ref{f2}.}
The culprit of Failure mode \ref{f2} (discussed in Section~\ref{sec:suboptimal_failure_mode}) is the large probability mass occupied by the suboptimal reward peak in the initial distribution. A natural fix is to initialize the policy more uniformly. Uniform initialization is difficult for Gaussian policies, as it requires a large variance which in turn puts large probability mass outside the action boundaries. 
With Beta policies, when $\alpha = \beta = 1$, the Beta distribution is by definition uniform across the interval. With the softplus parameterization of $\alpha = \log(1 + \exp(x_{\alpha})) + 1$, $\alpha=1$ corresponds to $x_{\alpha} = -\infty$, and similarly for $\beta$. In practice, we initialize with $x_{\alpha}, x_{\beta} \approx -4$ and hence $\alpha, \beta \approx 1.018$ for a close approximation of the uniform distribution. This eliminates suboptimal convergence due to bad initialization and hence fixes the failure mode on multi-peak reward landscapes, as shown in Figure~\ref{fig:double_peak_beta}.

\textit{MuJoCo performance.} In addition to preventing the two above failure modes, we found that Beta policy parameterization also improves benchmark performance on some MuJoCo tasks, compared to Gaussian policies, as illustrated in Figure~\ref{fig:beta_gaussian_mujoco_comp_full}. In particular, PPO achieves 2x cumulative reward on the Humanoid-v2 task (1M steps) with beta policy, compared to Gaussian policy.

\section{Discussion}

Many algorithmic design choices in RL are made based on empirical comparisons and it is easy to forget that their justification is limited to the regime of the chosen benchmark tasks. 
Our study highlights, at the example of PPO, that widely accepted design choices do not necessarily generalize to new environments.
We conduct failure mode analyses on synthetic test cases in order to understand and diagnose the broader implications of such design choices. 
We believe that such an approach offers a principled tool to isolate specific convergence issues of an algorithm and it augments classical benchmarks that often confound many different aspects of the environment. 

We emphasize two main insights of our PPO study: First, \textit{Beta policy parameterization} has attractive properties compared to standard Gaussian parameterization. i) It avoids excessively large stochastic gradient updates caused by actions in the distribution tails, which are particularly problematic in combination with the clipped surrogate objective. ii) It allows for approximately uniform initialization and is thus less sensitive to suboptimal initialization. iii) As pointed out by prior work~\cite{pmlr-v70-chou17a} it also eliminates the bias towards boundaries in truncated Gaussians on bounded action spaces.
Second, the advantage of clipping over the more principled \textit{KL-regularized objective} is limited to the regime of MuJoCo benchmarks with Gaussian policies. On synthetic examples, KL-regularized objective is more robust. 
On MuJoCo benchmarks with Beta policy, KL-regularized PPO has similar or better performance than clipped PPO in most tasks.

While our main analysis focuses on the choice of surrogate objective and policy parameterization, we include several surprising findings around two other common implementation choices in PPO in Appendix~\ref{sec:appendix_scaling} -- advantage normalization and reward scaling.
Briefly, we find that constant reward scaling can replace the complex reward scaling scheme studied in~\cite{engstrom2020implementation} without loss in performance; that advantage normalization helps PPO on some tasks but hurts on others; and that advantage normalization effectively anneals the mirror descent step size.
We hope that our initial findings provide a valuable starting point for further work towards fully understanding these design choices.

Finally, we believe that our insights on PPO can also help to better understand other policy gradient algorithms such as Natural Policy Gradient (NPG)~\cite{kakade2002natural} and V-MPO~\cite{song2019v}, as they are closely connected to KL-regularized PPO as discussed in Appendix~\ref{sec:ppo_connections}.

\section*{Achnowledgements}

The authors would like to thank Ben Recht for insightful discussions and for providing feedback on the initial draft of this work. 
In addition, the second author wishes to acknowledge support from the Swiss National Science Foundation Early Postdoc Mobility
Fellowship Program.

\section*{Code availability}
Our code is available at \url{https://github.com/chloechsu/revisiting-ppo}.

\newpage

\bibliographystyle{plain}
\bibliography{references}

\newpage
\section*{Appendix}
\appendix
\section{KL-regularized PPO: convergence guarantees and connections to mirror descent, NPG, and V-MPO}
\label{sec:ppo_connections}

We first focus on the connection between KL-regularized PPO and mirror descent. We extend and simplify existing expositions of this connection and derive convergence guarantees for PPO with parameterized policy classes.

Then, we also connect KL-regularized PPO to natural policy gradient (NPG) and V-MPO.
We note that PPO with KL penalty in either direction is approximately equivalent to both Natural Policy Gradient~\cite{kakade2002natural} and mirror descent with I-projection, while V-MPO~\cite{song2019v} is equivalent to mirror descent with reverse I-projection (also known as M-projection).

\subsection{Connection to mirror descent}
\label{sec:mirror}

Mirror Descent has proven a valuable theoretical tool for deriving convergence results for policy gradient-type methods~\cite{liu2019neural,agarwal2019optimality,neu2017unified,geist2019theory}. 

We recall that mirror descent~\cite{beck2003mirror} 
in its general form is applied to an objective $f$ and optimizes over a distribution family $\mathcal{P}$. Each mirror descent iteration optimizes for a linear approximation of $f$ plus a KL-regularization term:
\begin{equation}
\pi^{k+1} := \argmin_{\pi \in \mathcal{P}} \left\{\inner{\grad f(\pi^k)}{\pi} + \frac{1}{\eta} \DKL(\pi\;||\;\pi^k)\right\},
\end{equation}
where $\eta$ denotes the step size and the negative entropy is used for the Bregman divergence.

When applied to the RL setting where the goal is to maximize expected advantage\footnote{The advantage $A^{\pi}(s,a)$ is a proxy for how much better a particular action $a$ is on state $s$ compared to a randomly sampled action from $\pi(\cdot|s)$.} we choose $f$ to measure the expected advantage at each state $s$, i.e.,
\[f_s(\pi_\theta) = -\E_{a\sim\pi_\theta(\cdot|s)}[A^{\pithetak}(s, a)]\;\forall s.\] 
The corresponding infinite-dimensional gradient is $\grad f_s(\pithetak) = A^{\pithetak}(s, \cdot)$. Hence, one step of  mirror descent for this particular choice of $f$ is equivalent to maximizing the $\LKLflip(\theta)$ PPO surrogate objective in \eqref{eq:Lreverse}.
\begin{equation}
\pithetakplus(\cdot|s) := \argmax_{\theta \in \Theta} \left\{ \E_{a\sim\pi_\theta(\cdot|s)}[A^{\pithetak}(s, a)] - \frac{1}{\eta} \DKL(\pi_\theta\;||\; \pithetak)\right\}.
    \label{eq:mirror_descent_rl}
\end{equation}

\subsection{Connection to projected multiplicytive weights}

Mirror descent can be shown to follow the multiplicative weights (MW) update rule. 
Multiplicative weights (MW) minimize regret in an online learning setting with respect to payoff functions $m_k: \cX \rightarrow [-\rho, \rho]$ with width $\rho$. Given a domain $\cX$, \MW starts with the uniform distribution $\pi_0$. In the k-th round, \MW observes payoffs $m_k(x)$ for each $x\in\cX$ and updates the distribution $\pi_k$ according to the following update rule:
\begin{equation}\label{eq:mw_finite_update}
    \pi_{k+1}(x) \propto \pi_k(x)\; e^{\eta m_k(x)},
\end{equation}
where $\eta \in (0, \frac{1}{2})$ is the \MW learning rate.
Intuitively, the update rule boosts elements with high payoff and down-weigh elements with low payoff\footnote{The update rule is sometimes written in multiplicative weights literature with penalty instead of payoff, and with a linear factor $1+\eta m_k$ instead of the exponential factor $e^{\eta m_k}$. The differences lead to slightly different bounds. See Section~2 in \cite{arora2012multiplicative} for a detailed discussion.}.

\textit{Information projection.}
The  \MW update  is nonparameteric and hence the updated policy $\pi^{MW}$ proposed by \MW is not necessarily in the parameteric family $\cP$.
When working with parameterized distributions, information projection is a natural way to project the exact \MW update to a parameterized family $\cP = \{\pi_\theta: \theta \in \Theta\}$.
Information projection finds the closest distribution within the family, as measured by KL-distance.
\begin{definition}[I-projection]
The information projection of a probability distribution $q$ onto a set of distributions $\cP$ is
    \begin{equation}\label{eq:proj_def}
        \proj_{\cP}q = \argmin_{p\in\mathcal{P}}\DKL(p\;||\;q).
    \end{equation}
\end{definition}

Therefore, reverse-KL-regularzied PPO also follows projected MW, with target updates proportional to exponentiated advantages:
    \begin{equation}
        \label{eq:proj_mw_rule}
        \pi^{MW}(\cdot|s) \propto \pithetak(\cdot|s)\; \exp(\eta A^{\pithetak}(s,\cdot));\;\;\;\;
        \pithetakplus(\cdot|s) = \proj_{\{\pi_\theta(\cdot|s):\; \theta \in \Theta\}}\pi^{MW},
    \end{equation}
With information projection, we can project the infinite-dimensional target \MW update to a parameterized family. In practice, \MW with information projection formulates an optimization objective, requiring only an implicit representation of the target \MW update.

Using the definition of KL-divergence we can write
\begin{align*}
    \DKL\Big(\pi_\theta \;||\; \frac{1}{Z_k}\pi_k\; e^{\eta m_k}\Big)
    &= \log Z_k + \int \pi_\theta (\log \pi_\theta - \log\pi_k - \eta m_k) \\
   & = \log Z_k + \DKL(\pi_\theta\;||\;\pi_k)- \eta \E_{\pi_k}\Big[\frac{\pi_\theta}{\pi_k} m_k\Big].
    \end{align*}
Hence, for $\beta = 1/\eta$  minimizing the information projection distance is equivalent to maximizing the KL-regularized PPO objective $\LKLflip$ \eqref{eq:Lreverse}.

\subsection{Convergence guarantees}\label{sec:convergence}

Building on the connection of KL-regularized PPO to mirror descent and multiplicative weights, we review and extend existing convergence analysis.

The KL-regularized version of PPO inherits convergence guarantees of mirror descent for policy families that are closed under mixture. Thius includes softmax parameterization of discrete action spaces. In the following we will extend existing convergence analyses to cover a more practical setting of PPO, including approximate updates and adaptive step sizes.

Therefore, we extend existing \MW convergence guarantees to approximately projected \MW on closed and convex distribution families.
A parameterized family $\cP=\{\pi_\theta:\theta\in\Theta\}$ is convex when $\cP$ is closed under mixture.
Examples of convex families include distributions with a bounded support and distributions with a bounded probability density range.
In contrast to \MW, each update step in PPO \eqref{eq:proj_mw_rule} is only solved approximately via a fixed number of stochastic gradient steps.
To capture the approximation, we introduce the notion of approximate projection.
\begin{definition}[Approximate I-projection]
A distribution $\tilde p \in \cP$ is an $\alpha$-approximate information projection of $q$ if for any $p \in \cP$, \begin{equation*}
    \DKL(p\;||\;\tilde p) \leq \DKL(p\;||\;\proj_{\cP}q) + \alpha.
\end{equation*}
\end{definition}
Intuitively, $\tilde p$ is an $\alpha$-approximate projection of $q$ if it is close to the exact projection $\proj_{\cP}q$, such that the KL-distance from any other distribution in the family to $\tilde p$ is about the same as its KL-distance to the exact projection $\proj_{\cP}q$.

When mirror descent does not solve for the $\argmax$ in (\ref{eq:mirror_descent_rl}) exactly, it corresponds to an approximate information projection of the multiplicate weights update rule.
An \textit{$\alpha$-approximate projected \MW update} is an $\alpha$-approximate projection of the implicitly represented infinite-dimensional \MW update distribution from the update rule~\eqref{eq:mw_finite_update}.

We also review Bregman's theorem here, which will be used in the convergence analysis.
\begin{theorem}[Bregman]
    \label{thm:bregman}
Let $p, q$ be two distributions such that $p$ is in the non-empty closed convex set $\Gamma$ of measures. Then,
\begin{equation*}
    \DKL(p\;||\;\proj_{\Gamma}q) + \DKL(\proj_{\Gamma}q \;||\;q) \leq \DKL(p\;||\;q).
\end{equation*}
\end{theorem}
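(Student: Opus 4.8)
The plan is to prove Bregman's theorem (Theorem~\ref{thm:bregman}), the generalized Pythagorean inequality for the relative entropy, by reducing it to a first-order optimality condition for the I-projection. First I would set $q^\ast = \proj_{\Gamma}q = \argmin_{r \in \Gamma} \DKL(r\;||\;q)$, which exists and is unique because $\Gamma$ is non-empty, closed, and convex and $r \mapsto \DKL(r\;||\;q)$ is strictly convex and lower semicontinuous. The goal is the inequality $\DKL(p\;||\;q^\ast) + \DKL(q^\ast\;||\;q) \leq \DKL(p\;||\;q)$ for every $p \in \Gamma$.

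The key step is a variational argument. For $t \in [0,1]$, consider the path $r_t := (1-t)\,q^\ast + t\,p$, which lies in $\Gamma$ by convexity. Define $g(t) := \DKL(r_t\;||\;q)$. Since $q^\ast$ minimizes $\DKL(\cdot\;||\;q)$ over $\Gamma$ and $r_0 = q^\ast$, we must have $g'(0^+) \geq 0$. I would then compute this one-sided derivative explicitly: writing $\DKL(r\;||\;q) = \int r \log(r/q)$, differentiating under the integral sign gives
\begin{equation*}
g'(0^+) = \int (p - q^\ast)\left(\log\frac{q^\ast}{q} + 1\right) = \int (p - q^\ast)\log\frac{q^\ast}{q},
\end{equation*}
where the $+1$ term integrates to zero because $p$ and $q^\ast$ are both probability distributions. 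Hence the optimality condition reads $\int (p - q^\ast)\log\frac{q^\ast}{q} \geq 0$, i.e. $\int p \log\frac{q^\ast}{q} \geq \int q^\ast \log\frac{q^\ast}{q} = \DKL(q^\ast\;||\;q)$.

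To finish, I would expand the target expression directly:
\begin{equation*}
\DKL(p\;||\;q) - \DKL(p\;||\;q^\ast) = \int p \log\frac{p}{q} - \int p \log\frac{p}{q^\ast} = \int p \log\frac{q^\ast}{q},
\end{equation*}
and combining with the optimality inequality just derived gives $\DKL(p\;||\;q) - \DKL(p\;||\;q^\ast) \geq \DKL(q^\ast\;||\;q)$, which rearranges to the claimed inequality. The main obstacle is making the differentiation step rigorous: one has to justify interchanging the derivative and the integral (dominated convergence, using that the integrand's difference quotients are controlled near $t=0$), and one must handle the possibility that $\DKL(p\;||\;q)$ or $\DKL(q^\ast\;||\;q)$ is infinite — though in those cases the inequality holds trivially or can be obtained by a truncation/limiting argument. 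A secondary subtlety is ensuring $q^\ast$ has the same support considerations as $p$ so that the logarithms are well defined; absolute continuity of $q^\ast$ with respect to $q$ follows from finiteness of the minimum, and one restricts attention to $p$ with $\DKL(p\;||\;q) < \infty$, the only nontrivial case.
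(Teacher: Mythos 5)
Your proposal is correct, but there is nothing in the paper to compare it against: the paper states Bregman's theorem purely as a \emph{review} of a classical result (Csisz\'ar's Pythagorean inequality for I-projection) and gives no proof, using it only as a black box in Equation~\eqref{eq:proof_bregman_thm} within the proof of Theorem~\ref{thm:proj_mw_bound}. The argument you give is the standard one: take the first-order optimality condition of $r \mapsto \DKL(r\,\|\,q)$ along the segment $r_t = (1-t)q^\ast + t p$ inside the convex set $\Gamma$, observe that the linear terms cancel because $p$ and $q^\ast$ integrate to one, and note that the resulting inequality $\int p \log(q^\ast/q) \geq \DKL(q^\ast\,\|\,q)$ is exactly the deficit between $\DKL(p\,\|\,q)$ and $\DKL(p\,\|\,q^\ast)$. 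Your flagged caveats are the right ones and are genuinely needed for full rigor: the reduction to $\DKL(p\,\|\,q) < \infty$, the absolute continuity $p \ll q^\ast$ (which, if violated, would make $g'(0^+) = -\infty$ and contradict minimality), and the justification of differentiating under the integral. One small point worth making explicit if you write this up: existence of the minimizer $q^\ast$ requires lower semicontinuity of $\DKL(\cdot\,\|\,q)$ in a topology for which $\Gamma$ is compact (or a direct argument via a minimizing sequence and convexity), and uniqueness follows from strict convexity on the set where the divergence is finite; the paper's hypothesis that $\Gamma$ is ``closed'' is doing this work implicitly.
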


Now we are ready to state the convergence guarantees for projected multiplicative weights. We first state Theorem~\ref{thm:proj_mw_bound} in projected multiplicative weights language, and restate the theorem as Theorem~\ref{thm:ppo_bounds} in RL language.
The proof extends main ideas from the proof of Lemma 4.1 in~\cite{barak2009uniform} with a finer-grain analysis, using Bregman's theorem.

\begin{theorem}
\label{thm:proj_mw_bound}
    Let $\cP = \{\pi_\theta: \theta \in \Theta\}$ be a family of distributions closed under mixture. Let $m_k: \cX \rightarrow [-\rho_k, \rho_k]$ be payoff functions. Starting from any initial distribution $\pi_{\theta_0}$, after $K$ rounds of $\alpha$-approximate projected multiplicative weights update with step size $\eta_1, \cdots, \eta_K$ such that $\eta_k \in (0,\frac{1}{\rho_k})$, the payoff difference between $\pi_{\theta^k}$ and the optimal distribution $\pi_{\thetastar}$ is bounded by
\begin{equation*}
    \sum_{k=1}^K\eta_k\left(\E_{\pi_{\thetastar}}[m_k] - \E_{\pi_{\theta_k}}[m_k]\right) \leq
    \DKL(\pi_\thetastar\;||\;\pi_{\theta_0}) + \alpha K 
    + \sum_{k=1}^K \eta_k^2 \E_{\pi_{\theta_k}}[m_k^2],
\end{equation*}
and in the special case of constant step size $\eta$ and constant payoff function width $\rho$, we can simplify the bound as
\begin{equation}
    \frac{1}{K}\sum_{k=1}^K \left(\E_{x\sim\pi_\thetastar}[m_k(x)] -  \E_{x\sim\pi_{\theta_k}}[m_k(x)]\right)
    \leq \eta \rho^2 + \frac{\alpha}{\eta} + \frac{1}{\eta K}\DKL(\pi_{\thetastar}\;||\;\pi_0).
\end{equation}
\end{theorem}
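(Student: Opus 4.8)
The plan is to run the standard multiplicative-weights potential argument with the potential $\Phi_k = \DKL(\pithetastar \;||\; \pi_{\theta_k})$, but to account carefully for (i) the approximate projection step and (ii) the parametric family via Bregman's theorem. First I would fix a round $k$ and write $\piMW_{k}(x) \propto \pi_{\theta_k}(x) e^{\eta_k m_k(x)}$ for the (infinite-dimensional) idealized MW update, with normalizer $Z_k = \E_{\pi_{\theta_k}}[e^{\eta_k m_k}]$. The key one-step inequality to establish is a bound on $\Phi_k - \Phi_{k+1}$ from below in terms of $\eta_k\big(\E_{\pithetastar}[m_k] - \E_{\pi_{\theta_k}}[m_k]\big)$ minus error terms. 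I would split this into two pieces: the drift toward the idealized MW update, and the loss incurred by projecting back into $\cP$.

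For the projection piece, since $\cP$ is closed under mixture it is convex, so Bregman's theorem (Theorem~\ref{thm:bregman}) applies with $p = \pithetastar$, $q = \piMW_k$, $\Gamma = \cP$: it gives $\DKL(\pithetastar \;||\; \proj_{\cP}\piMW_k) \le \DKL(\pithetastar \;||\; \piMW_k) - \DKL(\proj_{\cP}\piMW_k \;||\; \piMW_k) \le \DKL(\pithetastar \;||\; \piMW_k)$. Then the $\alpha$-approximate projection definition, applied with $p = \pithetastar$ and $q = \piMW_k$, yields $\Phi_{k+1} = \DKL(\pithetastar \;||\; \pi_{\theta_{k+1}}) \le \DKL(\pithetastar \;||\; \proj_{\cP}\piMW_k) + \alpha \le \DKL(\pithetastar \;||\; \piMW_k) + \alpha$. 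So the projection contributes only an additive $\alpha$ per round, which accumulates to the $\alpha K$ term.

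For the drift piece, I would compute $\Phi_k - \DKL(\pithetastar \;||\; \piMW_k)$ directly from the definition of KL and the update rule: using $\log \piMW_k(x) = \log\pi_{\theta_k}(x) + \eta_k m_k(x) - \log Z_k$, this difference equals $\eta_k \E_{\pithetastar}[m_k] - \log Z_k$. It remains to lower-bound $-\log Z_k$. Here I would use the elementary inequality $e^{z} \le 1 + z + z^2$ valid for $z \le 1$ (applicable since $\eta_k \rho_k < 1$ gives $|\eta_k m_k| \le 1$), so $Z_k = \E_{\pi_{\theta_k}}[e^{\eta_k m_k}] \le 1 + \eta_k \E_{\pi_{\theta_k}}[m_k] + \eta_k^2 \E_{\pi_{\theta_k}}[m_k^2]$, and then $\log Z_k \le \eta_k \E_{\pi_{\theta_k}}[m_k] + \eta_k^2 \E_{\pi_{\theta_k}}[m_k^2]$ via $\log(1+u)\le u$. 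Combining, $\Phi_k - \DKL(\pithetastar\;||\;\piMW_k) \ge \eta_k\big(\E_{\pithetastar}[m_k] - \E_{\pi_{\theta_k}}[m_k]\big) - \eta_k^2\E_{\pi_{\theta_k}}[m_k^2]$.

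Chaining the two pieces gives the per-round bound $\Phi_k - \Phi_{k+1} \ge \eta_k\big(\E_{\pithetastar}[m_k] - \E_{\pi_{\theta_k}}[m_k]\big) - \eta_k^2\E_{\pi_{\theta_k}}[m_k^2] - \alpha$. Summing over $k = 1, \dots, K$, telescoping the potential, and using $\Phi_{K+1} \ge 0$ together with $\Phi_1 \le \DKL(\pithetastar \;||\; \pi_{\theta_0})$ (which follows from the same Bregman-plus-approximate-projection step applied at round $0$, or can be absorbed by indexing so that $\pi_{\theta_0}$ is the start), I get $\sum_k \eta_k\big(\E_{\pithetastar}[m_k] - \E_{\pi_{\theta_k}}[m_k]\big) \le \DKL(\pithetastar\;||\;\pi_{\theta_0}) + \alpha K + \sum_k \eta_k^2 \E_{\pi_{\theta_k}}[m_k^2]$, which is the claimed bound. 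The constant-step-size corollary follows by setting $\eta_k \equiv \eta$, $\rho_k \equiv \rho$, bounding $\E_{\pi_{\theta_k}}[m_k^2] \le \rho^2$, and dividing by $\eta K$. The main obstacle is bookkeeping the interplay between the approximate-projection slack $\alpha$ and the Bregman inequality so that the projection loss stays purely additive and does not interact with the drift term; the $e^z \le 1+z+z^2$ step and the telescoping are routine once that is set up correctly.
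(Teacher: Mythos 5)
Your proposal is correct and follows essentially the same route as the paper's proof: the same potential $\DKL(\pithetastar\,\|\,\pi_{\theta_k})$, the same computation of the one-step KL drift via $\log Z_k$ bounded by $e^x \le 1+x+x^2$ and $\log(1+u)\le u$, the same use of Bregman's theorem plus the $\alpha$-approximate projection to make the projection cost purely additive, and the same telescoping. No meaningful differences to report.
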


\begin{proof}
Let $p^{MW}_k$ be the multiplicative weights update $p^{MW}_k = \pi_{\theta_k}e^{\eta m_k}/Z_k$. For any $\theta$, by definition,
\begin{equation}\label{eq:proof_kl_diff}
    \DKL(\pi_\theta\;||\;p^{MW}_k) - \DKL(\pi_\theta\;||\;\pi_{\theta_k}) \\
    = -\int \pi_\theta \log \frac{p^{MW}_k}{\pi_{\theta_k}}
    = \log Z_k - \eta \E_{\pi_\theta}[m_k].
\end{equation}
    Meanwhile, since $m_k$ is bounded by $[-\rho_k, \rho_k]$, $\eta_k m_k$ is bounded by $[-1, 1]$, so using $e^x \leq 1 + x + x^2$ for $x\in[-1,1]$,
\begin{equation*}
    Z_k = \E_{\pi_{\theta_k}}[e^{\eta_k m_k}] \leq 1 + \eta_k\E_{\pi_{\theta_k}}[m_k] + \eta_k^2\E_{\pi_{\theta_k}}[m_k^2],
\end{equation*}
and using $log(1+x) \leq x$,
\begin{equation}
    \log Z_k \leq \eta_k\E_{\pi_{\theta_k}}[m_k] + \eta_k^2\E_{\pi_{\theta_k}}[m_k^2].
\end{equation}
Substituting into Equation~\ref{eq:proof_kl_diff}, we have
\begin{equation}\label{eq:proof_kl_diff_sub}
    \DKL\left(\pi_\theta\;||\;p^{MW}_k\right) - \DKL(\pi_\theta\;||\;\pi_{\theta_k}) \\
    \leq \eta_k\left(\E_{\pi_{\theta_k}}[m_k] - \E_{\pi_{\theta}}[m_k]\right) + \eta_k^2\E_{\pi_{\theta_k}}[m_k^2]
\end{equation}
By Bregman's Theorem,
\begin{equation}\label{eq:proof_bregman_thm}
    \DKL\left(\pi_\theta\;||\;p^{MW}_k\right) \geq \DKL\left(\pi_\theta\;||\;\proj_{\cP}p^{MW}_k\right).
\end{equation}
Since $\pi_{\theta_{k+1}}$ is an $\alpha$-approximation of $\proj_{\cP}p^{MW}_k$,
\begin{equation}\label{eq:proof_approx_proj}
\DKL\left(\pi_\theta\;||\;\proj_{\cP}p^{MW}_k\right) \geq \DKL(\pi_\theta\;||\;\pi_{\theta_{k+1}}) - \alpha.
\end{equation}
Therefore, combining Equation~\ref{eq:proof_kl_diff_sub}, Equation~\ref{eq:proof_bregman_thm}, and Equation~\ref{eq:proof_approx_proj},
\begin{equation}\label{eq:proof_kl_diff_one_step}
    \DKL(\pi_\theta\;||\;\pi_{\theta_{k+1}}) -\DKL(\pi_\theta\;||\;\pi_{\theta_k}) \\
    \leq \eta_k\left(\E_{\pi_{\theta_k}}[m_k] - \E_{\pi_{\theta}}[m_k]\right) + \eta_k^2\E_{\pi_{\theta_k}}[m_k^2] + \alpha.
\end{equation}
From the telescope sum of Equation~\ref{eq:proof_kl_diff_one_step} from $k=1$ to $K$, we have
\begin{equation*}
    \DKL(\pi_\theta\;||\;\pi_{\theta_K}) -\DKL(\pi_\theta\;||\;\pi_{\theta_0}) \\
    \leq \alpha K + \left(\sum_{k=1}^K\eta_k\left(\E_{\pi_{\theta_k}}[m_k] - \E_{\pi_{\theta}}[m_k]\right)\right) + \sum_{k=1}^K \eta_k^2 \E_{\pi_{\theta_k}}[m_k^2],
\end{equation*}
and hence
\begin{equation*}
    \sum_{k=1}^K\eta_k\left(\E_{\pi_{\theta}}[m_k] - \E_{\pi_{\theta_k}}[m_k]\right) \leq
    \DKL(\pi_\theta\;||\;\pi_{\theta_0}) + \alpha K
    + \sum_{k=1}^K \eta_k^2 \E_{\pi_{\theta_k}}[m_k^2].
\end{equation*}
In the special case of constant step size $\eta$, we get the desired bound
\begin{equation*}
    \frac{1}{K}\sum_{k=1}^K\E_{\pi_{\theta}}[m_k] - \frac{1}{K}\sum_{k=1}^K\E_{\pi_{\theta_k}}[m_k] \leq \frac{\eta}{K}\sum_{k=1}^K\E_{\pi_{\theta_k}}[m_k^2] + \frac{\alpha}{\eta} +\frac{1}{\eta K}\DKL(\pi_\theta\;||\;\pi_{\theta_0}).
\end{equation*}

If we further substituting with $\E_{\pi_{\theta_k}}[m_k^2]\leq \rho^2$ for constant payoff width $\rho$, we get the simpler but coarser bound.
\end{proof}

The $\frac{\alpha}{\eta}$ term captures the additional error caused by PPO not exactly optimizing the surrogate objective in each iteration.
With $\alpha = 0$ and $\pi_0$ being the uniform distribution, Theorem~\ref{thm:proj_mw_bound} can recover the regret bound for exact \MW on finite domain $\cX$ (see Theorem 2.3 in~\cite{arora2012multiplicative}), using $\DKL(\pi\;||\;\pi_0) \leq \ln|\cX|$ on discrete spaces. 

\begin{theorem}[Restating Theorem~\ref{thm:proj_mw_bound}]
    \label{thm:ppo_bounds}
    Let $\cP = \{\pi_\theta: \theta \in \Theta\}$ be a family of policies closed under mixtur. Assume bounded advantages $\Aestk \in [-\rho_k, \rho_k]$, and assume the gradient steps in each PPO iteration achieve an $\alpha$-approximate projection of the surrogate objective \eqref{eq:Lreverse} on state $s$. Starting from initial policy $\pi_{\theta_0}$, after $K$ iterations with step sizes $\eta_1, \cdots, \eta_K$ such that $\eta_k \in (0,\frac{1}{\rho_k})$, we can bound the difference in advantages between $\pithetak$ and the optimal policy $\pi_{\thetastar}$ on state $s$ by
\begin{multline*}
    \sum_{k=1}^K\eta_k \left(\E_{a \sim \pithetastar(\cdot|s)}[\Aestk(s,a)] - \E_{a\sim\pithetak(\cdot|s)}[\Aestk(s,a)]\right) \leq \\
    \DKL(\pithetastar(\cdot|s)\;||\;\pi_{\theta_0}(\cdot|s)) + \alpha K 
    + \sum_{k=1}^K \eta_k^2 \E_{a\sim\pithetak(\cdot|s)}[\Aestk(s,a)^2],
\end{multline*}
and in the special case of constant step size $\eta$ and constant advantage width $\rho$ on discrete action spaces, we can simplify the bound as
    \begin{equation*}
        \frac{1}{K}\sum_{k=1}^K \left(\max_{a}\Aestk(s,a) -  \E_{a\sim\pithetak(\cdot|s)}[\Aestk(s,a)]\right) 
    \leq \eta \rho^2 + \frac{\alpha}{\eta} + \frac{1}{\eta K}\DKL(\pithetastar(\cdot|s)\;||\;\pi_0(\cdot|s)),
\end{equation*}
meaning the average action under $\pithetak$ is close to the best action under advantage function $\Aestk$.
\end{theorem}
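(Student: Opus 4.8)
The plan is to recognize Theorem~\ref{thm:ppo_bounds} as a direct instantiation of Theorem~\ref{thm:proj_mw_bound}, so that all that remains is to set up the correspondence and invoke the earlier bound. First I would fix a state $s$ and instantiate Theorem~\ref{thm:proj_mw_bound} with domain $\cX = \mathcal{A}$, payoff functions $m_k(\cdot) := \Aestk(s,\cdot)$, and the distributions of that theorem identified with the per-state policies $\pithetak(\cdot|s)$. The hypothesis $\Aestk(s,\cdot)\in[-\rho_k,\rho_k]$ supplies the payoff width $\rho_k$, and the condition $\eta_k\in(0,1/\rho_k)$ is exactly the step-size restriction demanded by Theorem~\ref{thm:proj_mw_bound}.

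The point that needs checking is that one reverse-KL-regularized PPO update is an $\alpha$-approximate projected multiplicative weights step. This follows from the computation in the projected multiplicative weights discussion above: minimizing the information-projection distance of $\pi_\theta(\cdot|s)$ onto the target $\pi^{MW}(\cdot|s)\propto\pithetak(\cdot|s)\exp(\eta\Aestk(s,\cdot))$ differs from maximizing the restriction of $\LKLflip$ in \eqref{eq:Lreverse} to state $s$ (with $\beta=1/\eta$) only by the $\theta$-independent constant $\log Z_k$, so the two are the same optimization problem. Hence the hypothesis that the gradient steps reach an $\alpha$-approximate maximizer of \eqref{eq:Lreverse} on state $s$ is precisely the statement that $\pithetakplus(\cdot|s)$ is an $\alpha$-approximate I-projection of $\pi^{MW}(\cdot|s)$, and applying Theorem~\ref{thm:proj_mw_bound} verbatim with its free distribution set to $\pithetastar(\cdot|s)$ yields the first (general) bound.

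To obtain the simplified bound on discrete action spaces I would specialize the free distribution in Theorem~\ref{thm:proj_mw_bound} to the point mass on an action $a^\ast$ that maximizes $\Aestk(s,\cdot)$ for every $k$ — such a common maximizer exists in the single-state setting, where the advantage is the reward shifted by the action-independent value of $s$ — so that $\max_a\Aestk(s,a)$ is attained in expectation by one fixed policy, which we take as $\pithetastar(\cdot|s)$; on the Softmax family with uniform $\pi_0$ the resulting $\DKL(\pithetastar(\cdot|s)\;||\;\pi_0(\cdot|s))$ is finite. Substituting constant step size $\eta$, constant advantage width $\rho$, and $\E_{a\sim\pithetak(\cdot|s)}[\Aestk(s,a)^2]\le\rho^2$ into the simplified bound of Theorem~\ref{thm:proj_mw_bound} then gives the stated inequality. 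I expect no serious obstacle beyond this bookkeeping; the only spot warranting care is that the two occurrences of $\alpha$ — as the accuracy of the surrogate maximization and as the accuracy of the I-projection — genuinely coincide, which they do precisely because the two objectives differ only by the additive, $\theta$-independent constant $\log Z_k$.
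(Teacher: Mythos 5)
Your proposal takes the same route as the paper: Theorem~\ref{thm:ppo_bounds} is presented there purely as a restatement of Theorem~\ref{thm:proj_mw_bound} under the dictionary $\cX=\mathcal{A}$, $m_k=\Aestk(s,\cdot)$, distributions $=$ per-state policies, with the surrogate-vs-projection identification supplied by the $\log Z_k$ computation in the projected-multiplicative-weights subsection; the paper offers no further proof, and your bookkeeping (including taking $\pithetastar(\cdot|s)$ to be the point mass on the maximizing action to get the $\max_a$ form --- where you are in fact more careful than the paper about needing a maximizer common to all $k$) fills it in correctly.

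One caveat on the step you single out as ``the only spot warranting care'': the equivalence you assert there is not justified by the $\log Z_k$ argument. The paper's definition of an $\alpha$-approximate I-projection requires $\DKL(p\,||\,\tilde p)\le \DKL\left(p\,||\,\proj_{\cP}q\right)+\alpha$ for \emph{every} $p\in\cP$ --- a condition on KL distances from other members of the family to $\tilde p$ --- not that $\tilde p$ comes within $\alpha$ of the optimal value of the projection objective $\DKL(\cdot\,||\,q)$. The fact that $\DKL\bigl(\pi_\theta(\cdot|s)\,||\,\pi^{MW}(\cdot|s)\bigr)$ and $-\eta\,\LKLflip(\theta)$ differ only by the $\theta$-independent constant $\log Z_k$ does identify the two optimization problems, but it does not convert ``within $\alpha$ of the maximal surrogate value'' into the definition's condition; in general neither notion implies the other. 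The hypothesis of the theorem should simply be read as assuming the definition's condition directly --- that is exactly what the proof of Theorem~\ref{thm:proj_mw_bound} consumes at Equation~\ref{eq:proof_approx_proj} --- in which case your extra conversion step is unnecessary and the rest of your argument goes through as written.
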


On discrete action spaces, when using uniform initialization, the upper bound in Theorem~\ref{thm:ppo_bounds} holds with $\frac{1}{\eta K}D_{\mathrm{KL}}(\pi_{\theta^*}||\pi_{\theta_0}) \leq \log|\mathcal{A}|$. As the number of training iterations $K$ increases, $\frac{1}{\eta K}D_{\mathrm{KL}}(\pi_{\theta^*}||\pi_{\theta_0})$ vanishes, and the impact of initialization diminishes.

\subsection{Connection to Natural Policy Gradient}
The Hessian of the KL penalty term in either KL-direction is Fisher information matrix $F(\theta_k)$, and the second-order Taylor expansion is 
\[\DKL(\pi_\theta\;||\;\pithetak) \approx \DKL(\pithetak\;||\;\pi_\theta) \approx \frac{1}{2}(\theta-\theta_k)^T F(\thetaold) (\theta-\theta_k).\] 
Therefore, optimizing either KL-regularized objective \eqref{eq:Lforward} or \eqref{eq:Lreverse} results in 
\[\theta_{k+1} \approx \theta_k -\frac{1}{\beta} F^{-1}(\theta_k) \nabla_\theta L(\theta_k),\] 
which is the natural gradient update of the unregularized policy gradient objective.

Each iteration of KL-regularized PPO is approximately equivalent to one natural gradient update in general. As a special case, on finite Markov Decision Processes (MDPs) with finite state space and softmax parameterization for finite action space, the Fisher information matrix is constant, and both NPG and KL-regularized PPO correspond to exact multiplicative weights.

\subsection{Comparison with V-MPO}
Compared to KL-regularized PPO as multiplicative weights with information projection (\ref{eq:proj_mw_rule}), V-MPO corresponds to multiplicative weights with moment projection. Information projection minimizes KL distance $\DKL(\pi_\theta\;||\;p_{\MW})$ to the target nonparametric distribution, where as moment project minimizes the flipped KL distance $\DKL(p_{\MW}\;||\;\pi_\theta)$.

On-Policy Maximum A Posteriori Policy Optimization (V-MPO)~\cite{song2019v} is also related to multiplicative weights.
The E-step in V-MPO constructs the nonparametric target distribution $\psi$ by using the same multiplicative weight updates rule as in (\ref{eq:proj_mw_rule}), and the M-step projects the target distribution to the parameterized family by maximizing weighted maximum likelihood loss $\sum_{s,a} \psi(s,a)\log \pi_\theta(a|s)$. Maximizing this weighted log likelihood is equivalent to minimizing $\DKL(\psi\;||\;\pi_\theta)$, known as moment projection.
V-MPO only takes samples corresponding to the top half advantages in the weighted maximum likelihood loss, which can be interpreted in the multiplicative weights framework as setting the bottom half advantages to $-\inf$.

While V-MPO and PPO are related in the multiplicative weights update form, information projection and moment projection results in different projections: when the nonparametric target distribution is multi-modal cannot be fit within the parametric family, information projection is more mode-seeking, whereas moment projection tends to be spread out more to cover the entire support.

\newpage
\section{Derivation of surrogate objective gradients}
\label{sec:gradients_derivation}

We will use $\mathbbm{1}$ to represent the indicator function and write $r(a|s):=\frac{\pi_{\theta}(a|s)}{\piold(a|s)}$ and $\hat A_{a,s}:= \hat A^{\piold}(a,s)$ to simplify notation.

\subsection{Unregularized objective}
The gradient of the unregularized objective $\cL(\theta)= \E_{a,s\sim \piold}\left[r(a|s) \hat A_{a,s}\right]$ corresponds to
\begin{equation*}
    \nabla_\theta \cL(\theta) 
    = \E_{a,s\sim\piold}\left[
        r(a|s)\hat A_{a,s} \nabla_\theta\log\pi_\theta(a|s) \right]
\end{equation*}

\subsection{Clipped objective}
The gradient of the clipped objective \eqref{eq:Lclip} corresponds to:
\begin{equation*}
    \nabla_\theta \LClip(\theta) = \E_{x\sim\piold} \left[
      r(a|s) \hat A_{a,s} \nabla_\theta \log\pi_\theta(a|s)
        \cdot
        \mathbbm{1}\left\{r(a|s) \in (1-\epsilon, 1+\epsilon)
        \textrm{ or } \sgn(r(a|s)-1) \neq
        \sgn(\hat A_{a,s})\right\} \right].
\end{equation*}

\subsection{KL-regularized objectives}

The gradient of the forward-KL objective \eqref{eq:Lforward} is
\begin{equation*}
    \nabla_\theta \LKLorig(\theta) = \nabla_\theta L(\theta) - \beta
    \nabla_\theta \DKL(\piold \;||\;\pi_\theta)
    = \E_{x\sim\piold}\left[
        \left(\frac{\pi_\theta}{\piold} A + \beta \right) \nabla_\theta\log\pi_\theta \right]
\end{equation*}
and the gradient of the reverse-KL objective \eqref{eq:Lreverse} is
\begin{equation*}
    \nabla_\theta \LKLflip(\theta) = \nabla_\theta L(\theta) - \beta
    \nabla_\theta \DKL(\pi_\theta \;||\;\piold)
    = \E_{x\sim\piold}\left[
        \frac{\pi_\theta}{\piold} \left(A - \beta \log(\frac{\pi_\theta}{\piold})\right) \nabla_\theta\log\pi_\theta \right].
\end{equation*}

To derive these expressions we first look at the KL penalty term. For forward KL we get
\begin{align*}
    \nabla_\theta \DKL(\piold\;||\;\pi_\theta) &= \nabla_\theta \int_x
    \piold(x) \log(\frac{\piold(x)}{\pi_\theta(x)})
    = \int_x\nabla_\theta \left(\piold(x) \log(\frac{\piold(x)}{\pi_\theta(x)})\right) \\
    & = -\int_x \piold(x) \nabla_\theta(\log(\pi_\theta(x))) \\
    & = -\E_{x\sim\piold} \left[\nabla_\theta(\log \pi_\theta) \right]
\end{align*}
and for reverse-KL
\begin{align*}
    \nabla_{\theta}\DKL(\pi_\theta\;||\;\piold) & =  \nabla_\theta \int_x
    \pi_\theta(x) \log(\frac{\pi_\theta(x)}{\piold(x)}) 
     =  \int_x \nabla_\theta \left( \pi_\theta(x)
    \log(\frac{\pi_\theta(x)}{\piold(x)}) \right) \\
    & = \int_x \nabla_\theta (\pi_\theta(x)) \log(\frac{\pi_\theta(x)}{\piold(x)})
    + \int_x \pi_\theta(x) \nabla_\theta \log(\pi_\theta(x)) \\
    & = \E_{x \sim \piold} \left[ \frac{\pi_\theta}{\piold}
    \log(\frac{\pi_\theta}{\piold}) \nabla_\theta (\log \pi_\theta) \right],
\end{align*}
where we used
$\int_x \pi_\theta(x) \nabla_\theta \log(\pi_\theta(x)) = \int_x \nabla_\theta
(\pi_\theta(x)) = \nabla_\theta \int_x\pi_\theta(x) = \nabla_\theta (1) = 0$.

When $\frac{\piold}{\pi_\theta} \approx 1$, by Taylor expansion
\begin{equation*}
     \frac{\pi_\theta}{\piold} \log\left(\frac{\pi_\theta}{\piold}\right) \approx 
     \frac{\pi_\theta}{\piold} - 1 + \frac{1}{2}\left(\frac{\pi_\theta}{\piold} -
     1\right)^2,
\end{equation*}
so using the identity $\int_x \pi_\theta(x) \nabla_\theta \log(\pi_\theta(x)) = 0$ again,
\begin{align*}
    \nabla_\theta \DKL(\pi_\theta\;||\;\piold) & \approx
    -\E_{x\sim\piold} \left[ \nabla_\theta(\log \pi_\theta) \right]
    + \frac{1}{2}\E_{x\sim\piold} \left[\left|\frac{\pi_\theta}{\piold} - 1\right|^2
    \nabla_\theta(\log \pi_\theta)\right]
    \\ & =
    \nabla_\theta \DKL(\piold\;||\;\pi_\theta)
    + \frac{1}{2}\E_{x\sim\piold} \left[\left|\frac{\pi_\theta}{\piold} - 1\right|^2 \nabla_\theta(\log \pi_\theta) \right]
\end{align*}

Or equivalently,
the difference between the two KL penalties is
\begin{equation*}
    \nabla_\theta \DKL(\pi_\theta\;||\;\piold) - \nabla_\theta
    \DKL(\piold\;||\;\pi_\theta) \approx
    \frac{1}{2}\E_{x\sim\piold} \left[|\frac{\pi_\theta}{\piold} - 1|^2 \nabla_\theta(\log \pi_\theta) \right]
\end{equation*}

This difference in gradients is only large if $\frac{\pi_\theta}{\piold}$ is far away
from 1 where the score function $\nabla_\theta(\log\pi_\theta)$ has large magnitude.
We empirically evaluate the correlation of $\DKL(\pi_\theta\;||\;\piold)$ and $ \DKL(\piold\;||\;\pi_\theta) $ in Figure \ref{fig:kl_correlation}

\subsection{Weighting of examples}
We state in Table~\ref{tab:gradient_weighting} the weighting of examples in the gradient calculation for each of the four surrogate objectives.
    Since $\E_{x\sim\piold}[\frac{\pi_\theta}{\piold}\nabla_\theta\log\pi_\theta] = 0$, the weightings are up to $\frac{\textrm{constant}}{A(x)}$. We chose the constants such that the weighting is $1$ when $\pi_\theta = \piold$.

\begin{table}[h!]
    \centering
    \caption{Gradients interpreted as weighting of examples.}
    \begin{tabular}{l|c}
        \toprule
         & Weighting of examples\\
         \midrule
        $\nabla_\theta L$ & 1 \\ 
        $\nabla_\theta \LKLflip$ & $1 - \frac{\beta}{A(x)}\log\left(\frac{\pi_\theta(x)}{\piold(x)}\right) = 1 + \frac{\beta}{A(x)}\log\left(\frac{\piold(x)}{\pi_\theta(x)}\right)$ \\
        $\nabla_\theta \LKLorig$ & $1 + \frac{\beta}{A(x)}\left(\frac{\piold(x)}{\pi_\theta(x)}-1\right)$ \\
        $\nabla_\theta \LClip$ & $\mathbbm{1}\left\{\frac{\pi_\theta(x)}{\piold(x)} \in (1-\epsilon, 1+\epsilon)
                                 \textrm{ or } \sgn\left(\frac{\pi_\theta(x)}{\piold(x)}-1\right) \neq \sgn(A(x))\right\}$\\
        \bottomrule
    \end{tabular}
    \label{tab:gradient_weighting}
\end{table}

\newpage

\section{Experimental setups}

\subsection{Constructed environments}
In all constructed environments, we use the same training configuration: 
In each PPO iteration, we sample 512 timesteps (16 batches, 32 in each minibatch) and run 10 epochs with learning rate 0.1.
We chose a relatively high learning rate in order to illustrate the failure behavior in a small number PPO iterations for illustrative purposes.
Similar failure examples also happen with varying batch sizes and learning rates. 

\textit{Single-peak 1D environment (Figure~\ref{fig:cts_bandit_failure}).} The reward is 1.0 on the interval $(-1.0, -0.8)$, and zero otherwise, with Gaussian noise of standard deviation 0.1.
For Beta policy and discretized policy, we set the action bounds to be $[-1.5, 1.5]$. The discrete policy discretizes the action space uniformly in 0.1 increments.

\textit{Double-peak 1D environment (Figure~\ref{fig:double_peak_failure}).}
The reward is given by
\begin{equation*}
    r(a) = 1.1 \times \exp\{-1.2\times (a+2)^2\} + 0.9 \times \exp\{-0.9\times (a-1)^2\},
\end{equation*}
with Gaussian noise of standard deviation 0.1. The action bounds are $[-5,5]$ for Beta policy.

\textit{Discrete grid environment  (Figure~\ref{fig:disc_bandit_failure}).}
We evaluate PPO on discrete grid environments of different sizes, varying the number of actions from 10 to 100 in increments of 10. In an environment with $n$ actions, the rewards are zero on $n/2$ actions, 0.5 on $(n-1)/2$ actions, and 1 on a single optimal action. We then also add Gaussian noise of scale 0.1 to rewards. This discrete environment satisfies the assumptions in Theorem~\ref{thm:ppo_bounds} for convergence guarantees.
When evaluting the probability of converging to optimal action, we repeat 20 runs for each settin, and define `converging to optimal action` as $\geq 95\%$ probability on the optimal action in the policy after 50 iterations. When inspecting the learned policies, we find a bimodal behavior that after 50 iterations the policy has either $\geq 95\%$ probability or $\leq 5\%$ probability on the optimal action. 

\subsection{MuJoCo experiments}

\begin{table}
\centering
\begin{tabular}{lrrr}
\toprule
Environment            &  Observation Dimension & Action Dimension  &  Action Range    (per dim)                          \\
\midrule
Walker2d               &      17   &       6 &                           [-1, 1] \\
Humanoid               &      376   &    17 &                       [-0.4, 0.4] \\
Swimmer                &      8     &     2 &                           [-1, 1] \\
Hopper                 &      11     &     3 &                           [-1, 1] \\
HalfCheetah            &      17     &     6 &                           [-1, 1] \\
InvertedPendulum       &      4     &     1 &                           [-3, 3] \\
Reacher                &      11     &     2 &                           [-1, 1] \\
InvertedDoublePendulum &      11    &     1 &                           [-1, 1] \\
\bottomrule
\end{tabular}
\caption{MuJoCo environment description.}
\label{tab:mujoco_envs}
\end{table}

\begin{table}
\centering
\begin{tabular}{lrr}
\toprule
Hyperparameter & Value & Range Considered \\
\midrule
Total timesteps & 1M & N/A \\
Timesteps per iteration (horizon) & 2000 & 1000 - 4000 \\
Discount factor ($\gamma$) & 0.99 & N/A \\
GAE discount ($\lambda$) & 0.95 & N/A \\
Minibatch size & 64 & 32 - 512 \\
Clipping Threshold ($\epsilon$) & 0.2 & 0.0 - 0.5 \\ 
KL penalty coeff ($\beta$) & 3.0 & 0.1 - 10.0 \\ 
Policy \# epochs & 10 & 1 - 40 \\
    Policy LR & \hspace{4em}  $3 \times 10^{-4}$ &  \hspace{4em}   $1\times 10^{-5}$ - $5 \times 10^{-4}$ \\
Policy network hidden layers & $[64, 64]$ & N/A \\ 
Value \# epochs & 10 & N/A \\
Value LR & $2 \times 10^{-5}$ & N/A \\
Value network hidden layers & $[64, 64]$ & N/A \\ 
\bottomrule
\end{tabular}
\caption{PPO hyperparameters used for MuJoCo tasks. The clipping threshold and the KL penalty coefficient are exclusive, i.e., in the clipped objective version, the clipping threshold is 0.2 and the KL penalty coefficient is 0, while in the KL-regularized version the clipped threshold is $1\times 10^8$ and the KL penalty coefficient is 3.}
\label{tab:hyperparams}
\end{table}

We choose hyperparameters based on the default hyperparameter values in the original PPO paper~\cite{schulman2017proximal} and a recent PPO ablation study~\cite{engstrom2020implementation}. For important hyperparameters related to policy optimization (clipping threshold, KL penalty coefficient, number of epochs per PPO iteration, minibatch size, etc.) we consider a range of hyperparameters, and find that the default values work well. See Table~\ref{tab:hyperparams} for the range considered and the final values used.

Following~\cite{pmlr-v70-chou17a}, we parameterize the $\alpha$ and $\beta$ parameters in Beta distribution by softplus, with an added constant 1 to ensure $\alpha, \beta \geq 1$.

We evaluate on eight MuJoCo tasks in OpenAI gym as listed in Table~\ref{tab:mujoco_envs}. For each task, we report the mean cumulative episode reward and the 95\% confidence interval over 10 runs in all the figures.

\newpage
\section{Supplementary figures}\label{sec:supp_figs}

\begin{figure}[h]
    \centering
    \includegraphics[width=\linewidth]{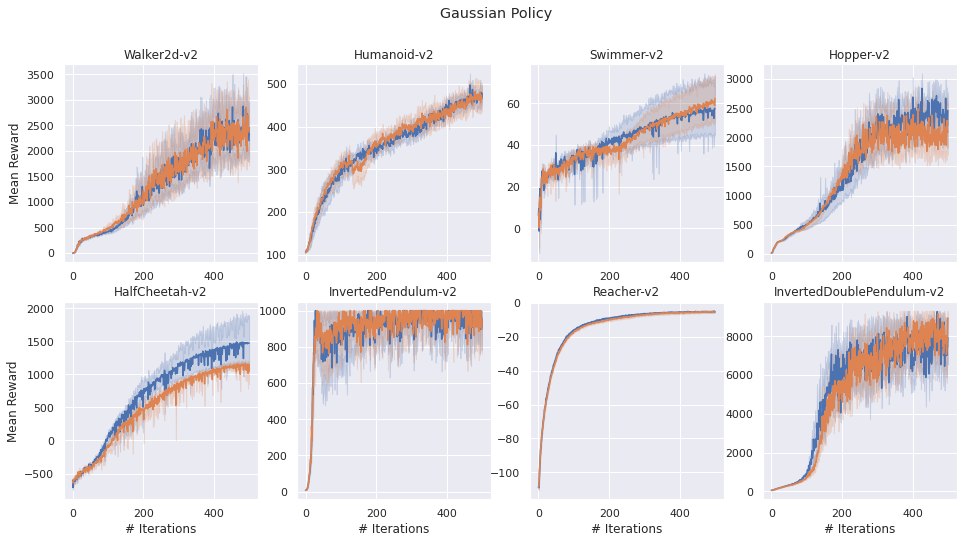}
    \includegraphics[width=\linewidth]{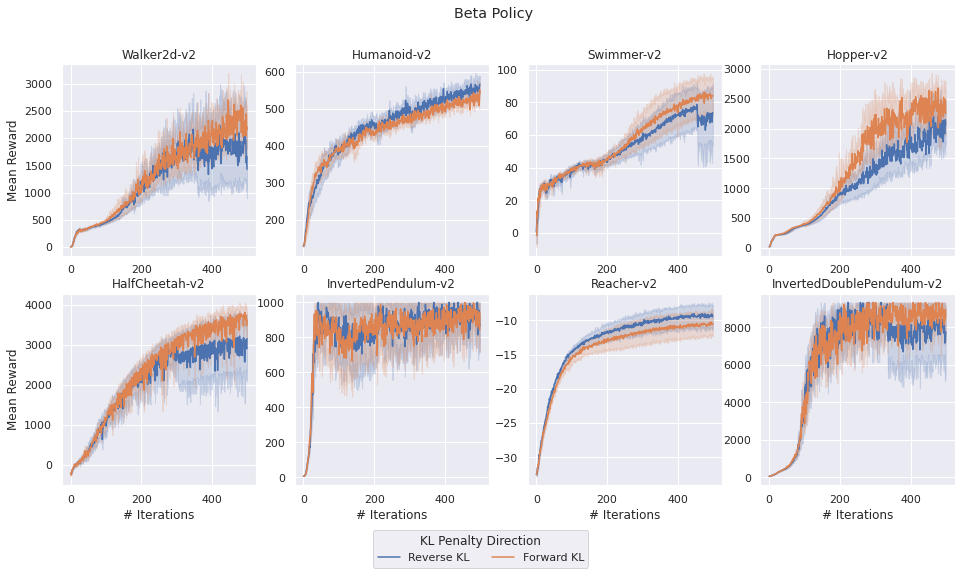}
    \caption{KL direction in KL penalty does not change PPO performance as measured by mean episode rewards on most MuJoCo locomotion tasks, whether using Gaussian policy or Beta policy.}
    \label{fig:kl_direction_comp}
\end{figure}

\begin{figure}[t]
    \centering
    \includegraphics[width=\textwidth]{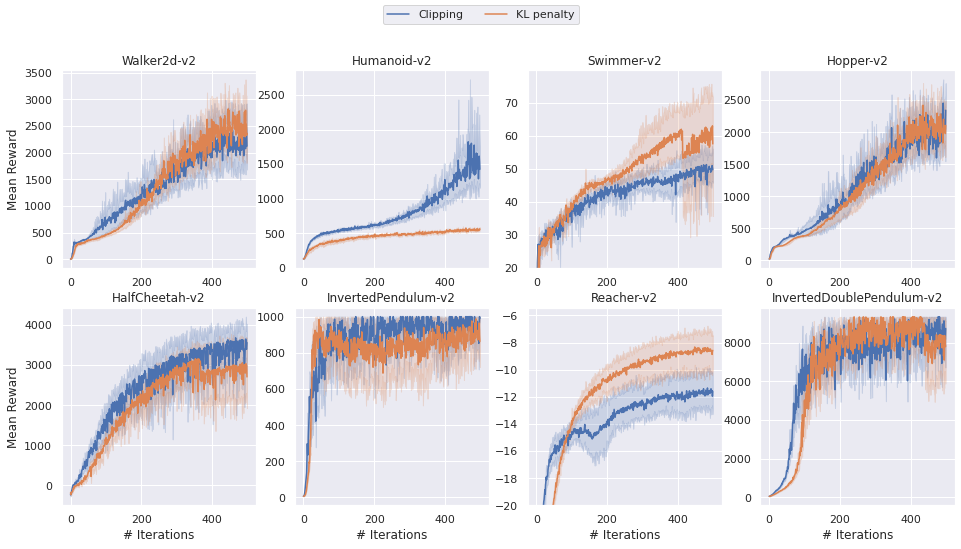}
    \caption{With Beta policy, KL-regularized PPO has similar or better performance than PPO with clipped surrogate objective on most MuJoCo tasks, with Humanoid being the exception. Humanoid performance here may not be the optimal for KL-regularized PPO, as it is higher dimensional than other tasks and may require significantly different hyperparameters tuned specifically for Humanoid.}
    \label{fig:clipping_kl_mujoco_comp_beta}
\end{figure}

\begin{figure}
    \centering
    \includegraphics[width=0.6\textwidth]{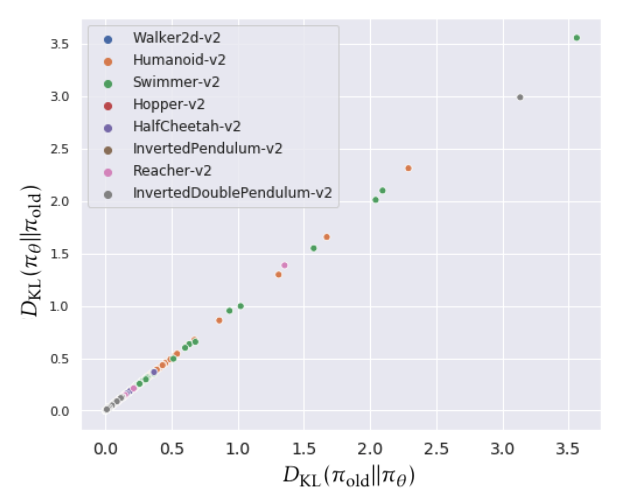}
    \caption{The empirically observed KL divergence in the two directions is nearly identical when evaluated on MuJoCo tasks.}
    \label{fig:kl_correlation}
\end{figure}

\begin{table}
    \centering
\begin{tabular}{lp{13cm}}
\toprule
\# Actions &                                              Games \\
\midrule
3         &                                    Freeway, Skiing \\
4         &                                 Atlantis, Breakout \\
6         &             Bowling, Carnival, Pong, Pooyan, Qbert \\
7         &                                            Assault \\
8         &                         Gopher, Phoenix, Tutankham \\
9         &                                    Asterix, Enduro \\
10        &                                             Amidar \\
14        &                                          Asteroids \\
18        &  Alien, Berzerk, Boxing, Centipede, Frostbite, Gravitar, Hero,
             Jamesbond, Kangaroo, Krull, Pitfall, Riverraid, Robotank,
            Seaquest, Solaris, Tennis, Venture, Zaxxon \\
\bottomrule
\end{tabular}
    \caption{Action space dimensions of all Atari environments in Open AI Gym.}
    \label{tab:atari_action_dims}
\end{table}

\clearpage

\section{Additional studies on PPO: reward scaling and advantage normalization}
\label{sec:appendix_scaling}

We include additional studies on the effects of reward scaling and advantage normalization, two common design choices in most PPO implementations.

\subsection{Reward scaling}

Previous work~\cite{engstrom2020implementation} has highlighted a surprising observation that reward scaling is crucial for PPO's success. The reward scaling scheme studied in~\cite{engstrom2020implementation} is complex and unintuitive, normalizing rewards by the standard deviation of cumulative returns. We clarify that a simple constant reward scaling scheme can replace the more complex scaling and achieve the same performance on MuJoCo tasks (Figure~\ref{fig:reward_scaling_performance}).

We hypothesize that the constant reward scaling is important for matching the scale of rewards with the scale of initialized value functions, since the rewards are compared to the value scores in advantage estimation.
To test this hypothesis, we visualize the distribution of estimated advantages during the initial 100 steps of training (Figure~\ref{fig:reward_norm_first_100}).
If this hypothesis were to be true, the advantage distributions are ``bumpy'' before reward scaling because they were dominated by the reward values, and become ``smoother'' after reward scaling as they strike more of a balance between rewards and value functione estimates. 

In the Humanoid, Walker, and Hopper tasks, reward scaling results in a huge difference in PPO performance, more than doubling the average reward. Interestingly, when comparing Figure~\ref{fig:reward_scaling_performance} and Figure~\ref{fig:reward_norm_first_100}, Hopper and Humanoid, and Walker are the same tasks where the reward scaling scheme has the largest effect on the normalized advantage distributions.
Reward scaling centers and smooths the advantage distributions in those tasks.

Further work is required to determine the affects of reward scaling and whether it can be compensated by better value function initialization.

\begin{figure}[h]
    \centering
    \includegraphics[width=\linewidth]{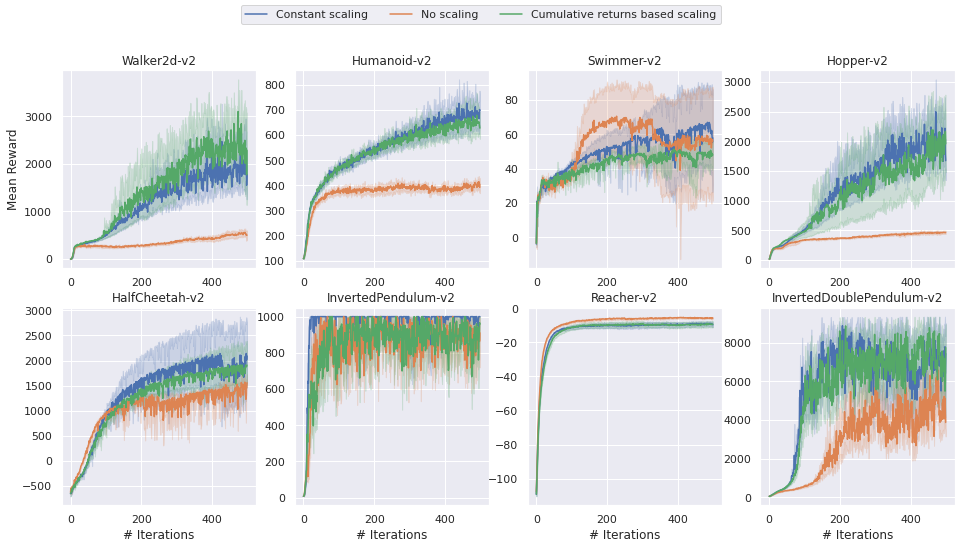}
    \caption{Reward scaling is critical for PPO's success. Simple constant scaling matches more complex cumulative returns based scaling.
    }
    \label{fig:reward_scaling_performance}
\end{figure}

\begin{figure}
    \centering
    \includegraphics[width=\linewidth]{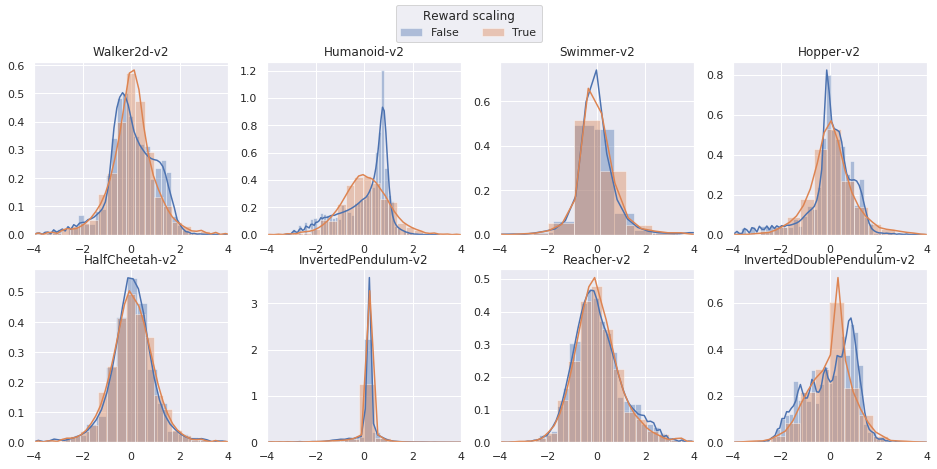}
    \caption{The distribution of normalized advantages, with and without the reward scaling scheme, in the initial phase of training. The histograms are aggregated over normalized advantage values from step 0, 10, 20, $\cdots$, 100.
    }
    \label{fig:reward_norm_first_100}
\end{figure}

\subsection{Advantage normalization}
Advantage normalization is another common heuristic in PPO. While it increases PPO performance on some MuJoCo tasks, it in fact reduces performance on some other tasks (Figure~\ref{fig:adv_norm_comp}). 

\begin{figure}[h]
\centering
\includegraphics[width=\textwidth]{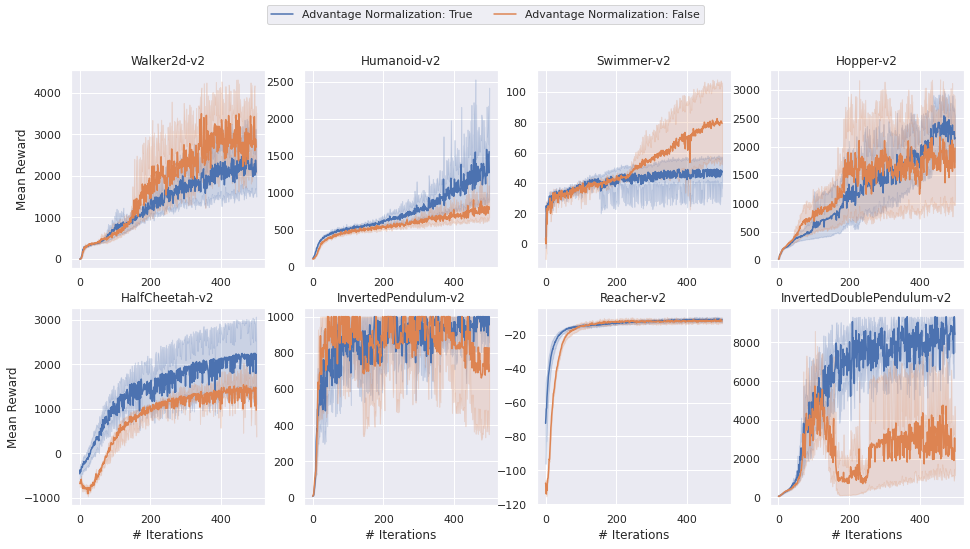}
    \caption{Comparison of PPO performance with vs without advantage normalization.}
\label{fig:adv_norm_comp}
\end{figure}

In light of the mirror descent connection (Appendix Section~\ref{sec:ppo_connections}), we can intrepret advantage normalization as dynamically adjusted mirror descent step size. Our extended analysis in Theorem~\ref{thm:ppo_bounds} accounts for advantage normalization and provides insights on how it influences PPO convergence bounds.

When inspecting the standard deviation of advantage estimates before normalization (Figure~\ref{fig:adv_std}), we notice that the standard deviation either decreases rapidly and flattens, or gradually increases, depending on the environment. This suggests that advantage normalization effectively anneals the mirror descent step size over the course of training.

It remains interesting in future work to determine more precise conditions of when advantage normalization helps or hurts.

\begin{figure}
\centering
\includegraphics[width=\textwidth]{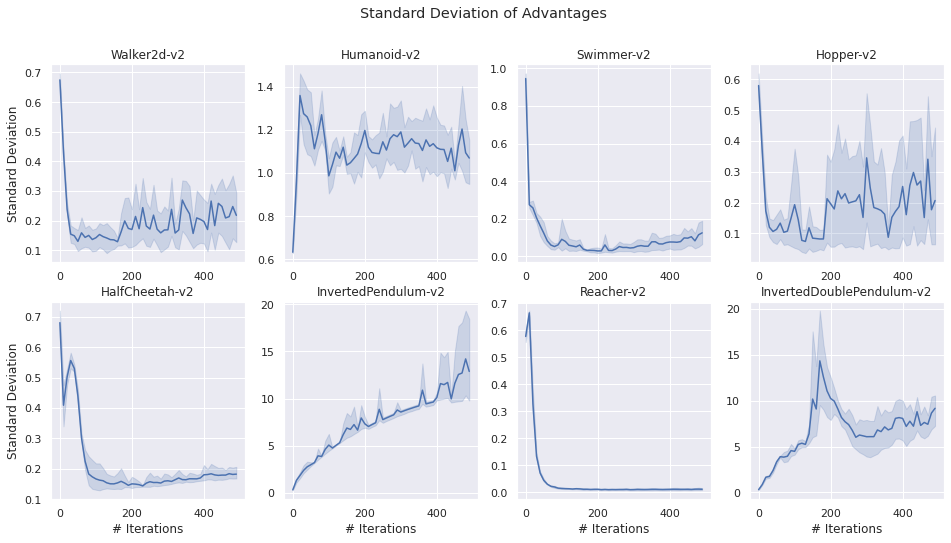}
    \caption{Standard deviation of advantage estimates. Over the course of training, the standard deviation either decreases rapidly and flattens, or gradually increases, depending on the environment.}
\label{fig:adv_std}
\end{figure}

\end{document}